\def\E{\mathrm{E}}
\def\N{\mathbb{N}}
\def\rv{\mathrm{v}}
\DeclareMathOperator{\diag}{diag}
\DeclareMathOperator{\rank}{rank}
\DeclareMathOperator*{\aff}{aff}
\newcommand{\R}{\mathbb{R}}
\newcommand{\bA}{\textbf{A}}
\newcommand{\bB}{\textbf{B}}
\newcommand{\bD}{\textbf{D}}
\newcommand{\bE}{\textbf{E}}
\newcommand{\bH}{\textbf{H}}
\newcommand{\bI}{\textbf{I}}
\newcommand{\bJ}{\textbf{J}}
\newcommand{\bL}{\textbf{L}}
\newcommand{\bM}{\textbf{M}}
\newcommand{\bP}{\textbf{P}}
\newcommand{\bQ}{\textbf{Q}}
\newcommand{\bR}{\textbf{R}}
\newcommand{\bS}{\textbf{S}}
\newcommand{\bT}{\textbf{T}}
\newcommand{\bU}{\textbf{U}}
\newcommand{\bV}{\textbf{V}}
\newcommand{\bW}{\textbf{W}}
\newcommand{\bX}{\textbf{X}}
\newcommand{\bDelta}{\boldsymbol{\Delta}}
\newcommand{\bSigma}{\boldsymbol{\Sigma}}
\newcommand{\bGamma}{\boldsymbol{\Gamma}}
\newcommand{\rmMax}{\textrm{max}}
\def\acosh{\mathrm{arcosh}}
\def\acos{\mathrm{arcos}}
\newcommand{\bZero}{\mathbf{0}}
\newcommand{\Ex}{\mathbb{E}}
\newcommand{\probO}{O_{\mathbb{P}}}
\newcommand{\probo}{o_{\mathbb{P}}}
\newcommand{\ttinf}{2\rightarrow\infty}
\newcommand{\indefO}{\mathbb{O}(p,q)}
\newcommand{\indefI}{\bI_{p,q}}
\newcommand{\defO}{\mathbb{O}}
\newtheorem{theorem}{Theorem}
\newtheorem{lemma}[theorem]{Lemma}
\theoremstyle{definition}
\newtheorem{remark}{Remark}
\newtheorem{definition}[theorem]{Definition}
\theoremstyle{definition}
\newtheorem{property}[theorem]{Property}
\author[1]{Patrick Rubin-Delanchy}
\author[2]{Joshua Cape}
\author[3]{Minh Tang}
\author[4]{Carey E. Priebe}
\affil[1]{University of Bristol and Heilbronn Institute for Mathematical Research, U.K.}
\affil[2]{University of Pittsburgh, U.S.A.}
\affil[3]{North Carolina State University, U.S.A.}
\affil[4]{Johns Hopkins University, U.S.A.}
\date{}
\title{A statistical interpretation of spectral embedding: \\the generalised random dot product graph}
\begin{document}
\maketitle

\begin{abstract}
  Spectral embedding is a procedure which can be used to obtain vector representations of the nodes of a graph. This paper proposes a generalisation of the latent position network model known as the random dot product graph, to allow interpretation of those vector representations as latent position estimates. The generalisation is needed to model heterophilic connectivity (e.g., `opposites attract') and to cope with negative eigenvalues more generally. We show that, whether the adjacency or normalised Laplacian matrix is used, spectral embedding produces uniformly consistent latent position estimates with asymptotically Gaussian error (up to identifiability). The standard and mixed membership stochastic block models are special cases in which the latent positions take only $K$ distinct vector values, representing communities, or live in the $(K-1)$-simplex with those vertices, respectively. Under the stochastic block model, our theory suggests spectral clustering using a Gaussian mixture model (rather than $K$-means) and, under mixed membership, fitting the minimum volume enclosing simplex, existing recommendations previously only supported under non-negative-definite assumptions.  Empirical improvements in link prediction (over the random dot product graph), and the potential to uncover richer latent structure (than posited under the standard or mixed membership stochastic block models) are demonstrated in a cyber-security example.
\end{abstract}

\section{Introduction}
While the study of graphs is well-established in Mathematics and Computer Science, it has only more recently become mainstream in Statistics, a shift driven at least in part by the advent of the Internet \citep{newman2018networks}. Yet, despite its breadth, translating existing graph theory into principled statistical procedures has produced many new mathematical challenges.

An example pertinent to this paper is the spectral clustering procedure \citep{von2007tutorial}. This procedure, which aims to find network communities, generally proceeds along the following steps. Given an undirected graph, the corresponding adjacency or normalised Laplacian matrix is first constructed. Next, the graph is \emph{spectrally embedded} into $d$ dimensions by computing the $d$ principal eigenvectors of the matrix --- in our case scaled according to eigenvalue --- to obtain a $d$-dimensional vector representation of each node. The first scaled eigenvector can be thought to provide the $x$-coordinate of each node, the second the $y$-coordinate, and so forth. Finally, these points are input into a clustering algorithm such as $K$-means \citep{steinhaus1956division,lloyd1982least} to obtain communities. The most popular justification for this algorithm, put forward by \citet{shi2000normalized} based on earlier work by \citet{donath1973lower,fiedler1973algebraic}, is its solving a convex relaxation of the normalised cut problem. A more principled statistical justification was finally found by \citet{rohe2011spectral}, see also \citet{lei2015consistency}, showing that the spectral clustering algorithm provides consistent identification of communities under the stochastic block model \citep{holland1983stochastic}. Their analysis however demands that eigenvectors corresponding to the largest \emph{magnitude} eigenvalues are used, countering earlier and contemporary papers which recommend using only the positive. 

The random dot product graph is a model which allows statistical interpretation of spectral embedding as a standalone procedure, i.e., without the subsequent clustering step. Through this broader view of spectral embedding, one finds that geometric analyses other than clustering are also productive. For example, simplex-fitting \citep{rubin2017consistency} and spherical clustering \citep{qin2013regularized,lyzinski2014perfect,lei2015consistency,passino2020spectral}, respectively, are appropriate under the mixed membership \citep{airoldi2008mixed} and degree-corrected \citep{karrer2011stochastic} stochastic block models, and manifold-fitting is appropriate under several other random graph models \citep{athreya2018estimation,trosset2020learning,rubin2020manifold,whiteley2021matrix}. However, the random dot product graph has an important shortcoming, addressed in this paper, which is to make a positive-definite assumption, consistent with the afore-mentioned practice of retaining only the positive eigenvalues, that is problematic for modelling several common types of graph connectivity structure.

The limitations of this positive-definite assumption are easy to identify using a stochastic block model. In standard form, this model posits that there is a partition of the nodes into $K$ communities, conditional upon which the edges occur independently according to a symmetric inter-community edge probability matrix $\bB \in [0,1]^{K \times K}$, known as the block matrix. If the model is extended to include degree correction \citep{karrer2011stochastic}, those probabilities are subject to nodewise scaling, so that for example a node $i$, of community $1$, and a node $j$, of community $2$, form an edge with probability $w_i  w_j\bB_{12}$. If, and only if, the block matrix is non-negative-definite, the random dot product graph can reproduce either model \citep{lyzinski2014perfect}. It will assign a latent position $X_i$ to each node $i$ which, in the standard case, is precisely one of $K$ possible points, each representing a community. Under degree correction, the position instead lives on one of $K$ rays emanating from the origin, with $\lVert X_i \rVert_2 \propto w_i$. In this way, nodes with larger magnitude positions tend to have larger degree.

A positive-definite block matrix is said to reflect homophilic connectivity, in which `birds of a feather flock together'. But to encounter a block matrix that has negative eigenvalues is not unusual. With $K=2$, negative eigenvalues will occur when $\bB_{12} > \bB_{11}, \bB_{22}$, e.g., under heterophilic connectivity, and may occur when $\bB_{11} > \bB_{12} > \bB_{22}$, e.g., under core-periphery connectivity  --- a densely connected core, community 1, with sparsely connected periphery, community 2 \citep{borgatti2000models}. In fact, when $K > 2$, negative eigenvalues may even occur when $\bB_{ii}>\bB_{ij}$ for each $i\neq j$, connectivity which could reasonably be considered homophilic. A reviewer gave the example
\[ \bB = 0.1 \times \left(\begin{array}{ccc} 9 &  0 &  8\\
0 &  6 &  5\\
8 &  5 & 9\end{array}\right),\]
which has one negative and two positive eigenvalues.

In a graph following a stochastic block model, the signs of the principal eigenvalues of the adjacency and normalised Laplacian will correspond to those of $\bB$, up to noise. In this way a graph from a two-community stochastic block model with $\bB_{12} > \bB_{11}, \bB_{22}$ should present two large-magnitude eigenvalues, one positive and one negative, with the others being close to zero. To give a light-hearted real data example, consider the graph of enmities between Harry Potter characters, a publically available dataset \citep{hpdata}. The same graph was previously studied by \citet{mara2020csne}, and more generally several literature studies involve analysis of character networks \citep{labatut2019extraction}. A plot of the eigenvalues of the graph adjacency matrix is shown in Figure~\ref{fig:spectrum}. Two eigenvalues stand out in magnitude, one positive and one negative, and for the purpose of this example the remaining will be treated as noise. 
\begin{figure}
\centering
\includegraphics[width=.8\textwidth]{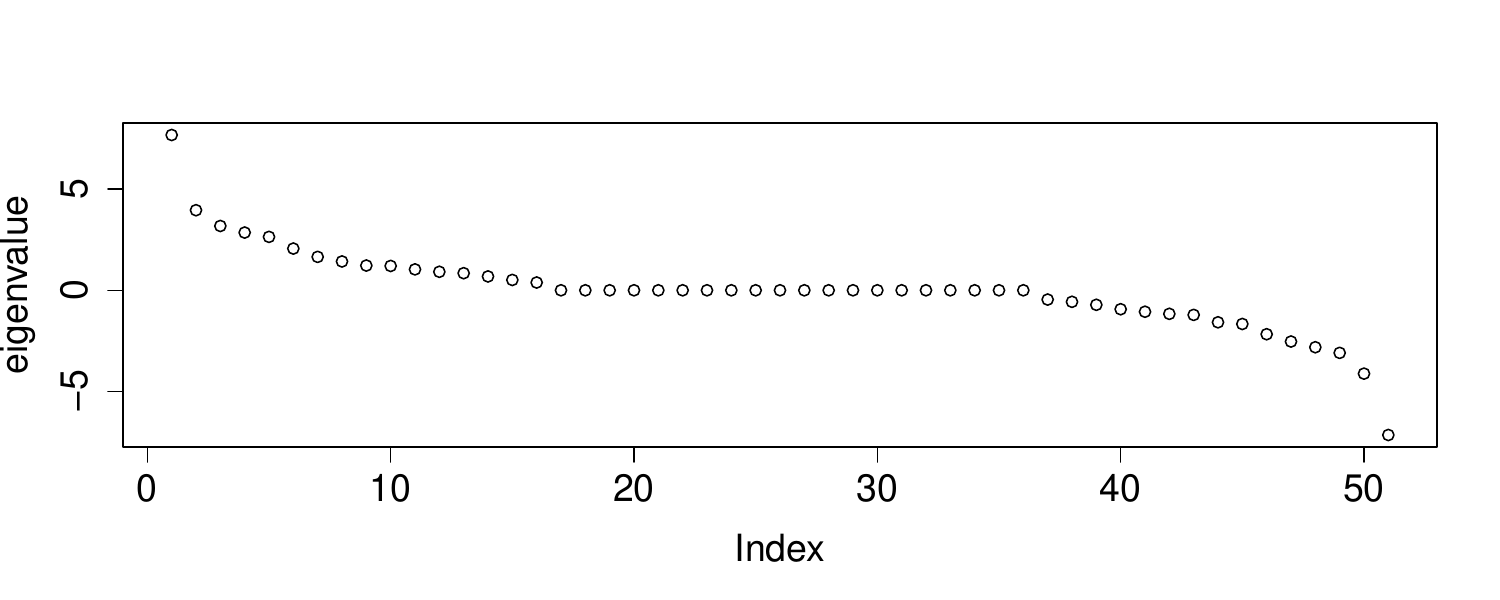}
\caption{Eigenvalues of the adjacency matrix of the graph of enmities between Harry Potter characters. Of the two largest-magnitude eigenvalues, the first is positive and the second negative, and the corresponding eigenvectors are used for spectral embedding in Figure~\ref{fig:harry_potter}.} \label{fig:spectrum}
\end{figure}

Figure~\ref{fig:harry_potter} shows the adjacency spectral embedding of the graph into $2$ dimensions (a formal definition to follow, Definition~\ref{def:ASE_and_LSE}), selecting eigenvectors corresponding to those eigenvalues. One can discern two rays from the origin which, as those familiar with the story will know, distinguish the good characters from the evil. This geometry is precisely what would be expected under a two-community degree-corrected stochastic block model, however it falls outside the scope of the random dot product graph because the second eigenvector, which gives the $y$-axis, has a negative eigenvalue. Upon implementing spherical clustering \citep{lyzinski2014perfect}, we find
\[\hat \bB \propto \left(\begin{array}{cc} 0.05 & 1 \\ 1 &0.09\end{array}\right),\]
which has one positive and one negative eigenvalue, and the colours of the points in the figure reflect the node partition thus obtained (community 1 in red, the `good' characters; community 2 in green, the `evil' characters). The block matrix $\bB$ is not fully identifiable due to the presence of nodewise scaling, but the inter-to-intra community ratios are. In this way, two `good' characters are estimated as 20 times less likely to be enemies than each would with someone `evil', and a similarly low level of enmity between `evil' characters is observed (the difference is not significant). 

\begin{figure}
\centering
\includegraphics[width=\textwidth]{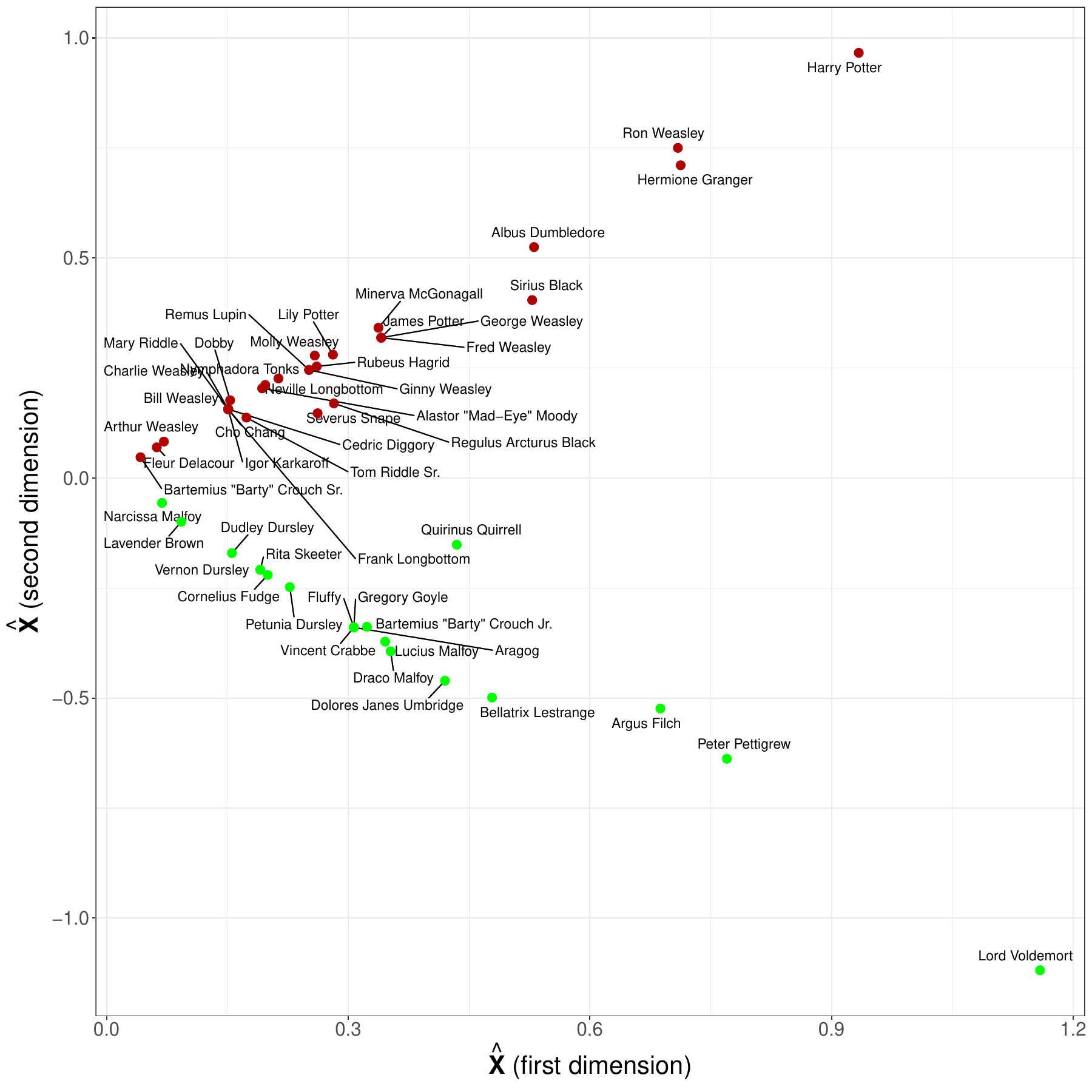}
\caption{Adjacency spectral embedding into $\R^2$ of the graph of enmities between Harry Potter characters. This pattern of two rays emanating from the origin is expected under a two-community degree-corrected stochastic block model, and the points are coloured by their inferred community, estimated using spherical clustering \citep{lyzinski2014perfect}.  The embedding uses the eigenvectors corresponding to the two largest-magnitude eigenvalues, of which the first is positive and the second negative, as shown in Figure~\ref{fig:spectrum}.} \label{fig:harry_potter}
\end{figure}

Moving beyond this toy example, a diversity of real-world graphs are surveyed in Section~\ref{sec:graph_survey}, finding that half present important negative eigenvalues. Following this, a closer study of computer network data is conducted in Section~\ref{sec:cyber-data} giving physical reasons for heterophilic connectivity structure, and showing that including negative eigenvalues improves predictions. A model generalising the random dot product graph to allow for negative eigenvalues is therefore called for.

Our proposed model has the generic structure of a \emph{latent position model} \citep{hoff2002latent}, in that each node $i$ is posited to have a latent position $X_i \in \R^d$, and two nodes $i$ and $j$ form an edge, conditionally independently, with probability $f(X_i, X_j)$, for some function $f$. A Generalised Random Dot Product Graph (GRDPG) is a latent position model with $f(x,y) = x^\top \indefI y$, where $\indefI = \diag(1, \ldots,1,-1, \ldots, -1)$, with $p$ ones followed by $q$ minus ones on its diagonal, and where $p \geq 1$ and $q \geq 0$ are two integers satisfying $p+q=d$. When $q=0$, the function $f$ becomes the usual inner product on $\R^d$, and the model reduces to the standard random dot product graph \citep{nickel06,young2007random,athreya2017statistical}.

The core asymptotic findings of this paper (Theorems~\ref{thrm:GRDPG_ASE_ttinf}--\ref{thrm:GRDPG_LSE_CLT}) mirror existing results for the random dot product graph \citep{sussman2012consistent,lyzinski2014perfect,athreya2016limit,lyzinski2017community,cape_biometrika,cape2017two,tang2016limit}: Whether the adjacency or normalised Laplacian matrix is used, the vector representations of nodes obtained by spectral embedding provide uniformly consistent and asymptotically Gaussian latent position estimates (up to identifiability). In this way, existing recommendations to fit a Gaussian mixture model (rather than $K$-means clustering) for spectral clustering under the stochastic block model \citep{athreya2016limit,tang2016limit}, and simplex-fitting under mixed membership \citep{rubin2017consistency}, previously justified only under non-negative-definite assumptions, now stand in general.

Other than the random dot product graph, there are several precursors to the GRDPG, most notably the eigenmodel of \citet{hoff2008modeling} with kernel $f(x,y) = \Phi(\mu + x^\top \boldsymbol \Lambda y)$ (ignoring covariates), and the proposed random dot product graph generalisation with kernel $f(x,y) = x^\top \boldsymbol \Lambda y$ by \citet{rohe2018note}, where $\boldsymbol \Lambda \in \R^{d \times d}$ is respectively diagonal or symmetric, $\mu$ is a scalar and $\Phi$ is the normal distribution function. Negative eigenvalues in $\boldsymbol \Lambda$ allow us to model negative eigenvalues in the graph adjacency matrix. To our best knowledge those models' connection to spectral embedding is unexplored. While the model by \citet{rohe2018note} evidently absorbs ours, the two are equivalent up to identifiability, and we will discuss how our asymptotic results can be adapted to this larger parameter space in Section~\ref{sec:alternative_parameterisations}. A contemporaneously written paper, \citet{lei2018network}, comes closer to our work, proposing the kernel $f(x,y) = \langle x_1, y_1 \rangle_1 - \langle x_2,y_2 \rangle_2$, where $x = (x_1, x_2)$ and $y = (y_1, y_2)$ live on the direct sum of two Hilbert spaces with respective inner products $\langle \cdot, \cdot \rangle_1$ and $\langle \cdot, \cdot \rangle_2$, and proving the consistency of adjacency spectral embedding in a form of Wasserstein distance. The GRDPG is a special case where the Hilbert spaces are $\R^p$ and $\R^q$, equipped with the Euclidean inner product. The advantage of Lei's analysis is to handle the infinite-dimensional case; on the other hand, our finite-dimensional results are stronger, so much that they lead to concrete methodological recommendations that could not be made based only on Lei's results. Those include to use fit a Gaussian mixture model (rather than $K$-means clustering) for spectral clustering under the stochastic block model (calling on Theorems~\ref{thrm:GRDPG_ASE_CLT} and \ref{thrm:GRDPG_LSE_CLT}), and minimum volume simplex fitting under mixed membership (calling on Theorem~\ref{thrm:GRDPG_ASE_ttinf}). If the latent positions of the GRDPG are independent and identically distributed (i.i.d.), as will be assumed in our asymptotic study, the model also admits an Aldous-Hoover representation \citep{aldous1981representations,hoover1979relations}, wherein each node is instead independently assigned a latent position uniformly on the unit interval, and connections occur conditionally independently according to a kernel $g : [0,1]^2 \rightarrow [0,1]$, known as a graphon \citep{lovasz2012large}. Conversely, an Aldous-Hoover graph follows a GRDPG if the integral operator associated with $g$ has finite rank \citep{lei2018network,rubin2020manifold}. If this operator has negative eigenvalues, it cannot be reproduced by the random dot product graph or any other latent position model with positive-definite kernel.

The rest of this article is organised as follows. In Section~\ref{sec:data_embedding_model}, we formally describe the data envisaged, the spectral embedding procedure, and the problem of finding a model-based rationale for this approach. Then we propose our solution, a generalisation of the random dot product graph model, discussing its identifiability and alternative parameterisations. Section~\ref{sec:spectral_embedding} presents our asymptotic results which, collected, say: spectral embedding provides uniformly consistent estimates of the latent positions of a GRDPG, with asymptotically Gaussian error (up to identifiability). In Section~\ref{sec:implications}, the implications of this theory for standard and mixed membership stochastic block model estimation are discussed, namely, the advantages of fitting a Gaussian mixture model over $K$-means for spectral clustering under the stochastic block model, and the consistency of minimum volume enclosing simplex fitting under mixed membership. In Section \ref{sec:real-data} we review a diversity of real-world graphs, showing that many exhibit important negative eigenvalues, before focussing on a cyber-security application. Section~\ref{sec:conclusion} concludes. All proofs are relegated to the Appendix.

\section{The data, spectral embedding, and model}\label{sec:data_embedding_model}
This paper concerns statistical inference based on a single, observed graph on $n$ nodes, labelled $1, \ldots, n$. In conventional statistical terms, one may view the graph as `the data', and its number of nodes as a loose substitute for `sample size'. The graph is represented by its adjacency matrix $\bA \in \{0,1\}^{n \times n}$, where $\bA_{ij}
= 1$ if and only if there is an edge between the $i$th and $j$th node. The graph is assumed to be undirected with no-self loops or, equivalently, $\bA$ is symmetric ($\bA = \bA^\top$) and hollow ($\bA_{ii}=0$ for all $i$). 

To allow statistical analysis using mainstream methods (e.g., clustering), it is common to seek a vector representation of each node, and spectral embedding is a popular tool for this purpose.

\begin{definition}[Adjacency and Laplacian spectral embedding into $\R^d$]
  \label{def:ASE_and_LSE}
  Let $\hat{\bS}$ be the $d \times d$ diagonal matrix containing the $d$ largest eigenvalues of $\bA$ \emph{in magnitude} on its diagonal, arranged in decreasing order (based on their actual, signed, value), and let $\hat{\bU} \in \R^{n \times d}$ be a matrix containing, as columns, corresponding orthonormal eigenvectors arranged in the same order. Define the adjacency spectral embedding of the graph into $\R^{d}$ as the matrix $\hat{\bX} = [\hat X_{1},\dots, \hat X_{n}]^{\top}= \hat{\bU}|\hat{\bS}|^{1/2} \in \R^{n \times d}$, i.e., $\hat{\bX}$ is a matrix whose $i$th row, transposed into a column vector, is $\hat X_i$.
  Similarly, let  $\bL =\bD^{-1/2}\bA\bD^{-1/2} \in \R^{n \times n}$ denote the normalised Laplacian of the graph, where $\bD \in \R^{n \times n}$ is the degree matrix, a diagonal matrix with $\bD_{ii} = \sum_j \bA_{ij}$ for all $i$, and let $\breve{\bS}, \breve{\bU}$ respectively denote the corresponding matrices of largest-magnitude eigenvalues and associated eigenvectors. Define the Laplacian spectral embedding of the graph into $\R^{d}$ by $\breve{\bX} = [\breve X_{1},\dots, \breve X_{n}]^{\top} = \breve{\bU}|\breve{\bS}|^{1/2}  \in \R^{n \times d}$.  
\end{definition}
The problem considered in this paper is finding a model-based rationale for spectral embedding. We seek a random graph model that defines true latent positions $X_1, \ldots, X_n \in \R^d$ such that $\hat X_i$ provides an estimate of $X_i$ with quantifiable error. This search will also yield a suitable transformation of $X_i$ that may be treated as the estimand of $\breve X_{i}$.

As alluded to in the introduction, a relatively large body of work exists, comprehensively reviewed in \citet{athreya2017statistical}, addressing the same problem with the eigenvalues in Definition~\ref{def:ASE_and_LSE} selected by largest (signed) value, in other words, leaving out negative eigenvalues and corresponding eigenvectors. To interpret such embeddings, a latent position model known as the random dot product graph \citep{nickel06,young2007random} is put forward and, in this model, an edge between two nodes occurs with probability given by the inner product of their latent positions. However, such a model must result in a non-negative-definite edge probability matrix $\bP_{ij} = X_i^\top X_j$, and cannot explain significant negative eigenvalues in $\bA$, because the matrices are related by $\E(\bA \mid \bP) = \bP$, so that any difference between their spectra is due to noise.

Our solution is a model which generalises the random dot product graph, in specifying that the probability of an edge between two nodes is given by the \emph{indefinite} inner product of their latent positions. For two vectors $x, y \in\R^d$, this product is $x^\top \indefI y$, where $\indefI$ is a diagonal matrix with $p$ ones followed by $q$ minus ones on its diagonal, and $p \geq 1$ and $q \geq 0$ are two integers satisfying $p+q=d$. A formal model definition is now given.

\begin{definition}[Generalised random dot product graph model]
	\label{def:GRDPG}
        Let $\mathcal{X}$ be a subset of $\R^{d}$ such that $x^{\top}\indefI y \in [0,1]$ for all $x,y \in \mathcal{X}$, and $\mathcal{F}$ a joint distribution on $\mathcal{X}^n$. We say that $(\bX,\bA )\sim \text{GRDPG}(\mathcal{F})$, with signature $(p,q)$, if the following hold. First, let $(X_1, \ldots, X_n) \sim \mathcal{F}$, to form the latent position matrix $\bX = [X_{1},\dots,X_{n}]^{\top} \in \R^{n \times d}$. Then, the graph adjacency matrix $\bA \in \{0,1\}^{n \times n}$ is symmetric, hollow and, conditional on $X_1, \ldots, X_n$, 
        \begin{equation}
          \bA_{ij} \overset{ind}{\sim} \text{Bernoulli}\left(X_i^\top \indefI X_j\right), \label{eq:grdpg}\end{equation}
        for all $i < j$.
\end{definition}

\subsection{Special cases}
\subsubsection{The stochastic block model} \label{sec:spesh_sbm}
A graph follows a stochastic block model if there is a partition of the nodes into $K$ communities, conditional upon which $\bA_{ij}\overset{ind}\sim \text{Bernoulli}(\bB_{Z_i Z_j})$, for $i < j$, where $\bB \in [0,1]^{K \times K}$ is symmetric and $Z_i \in \{1,\ldots,K\}$ is an index denoting the community of the $i$th node.

Let $p \geq 1, q \geq 0$ denote the number of strictly positive and strictly negative eigenvalues of $\bB$ respectively, put $d = p+q$, and choose $\rv_1, \ldots, \rv_K \in \R^d$ such that $\rv_k^{\top} \indefI \rv_l = \bB_{kl}$, for $k,l \in \{1,\ldots,K\}$. We will take as a canonical choice the $K$ rows of $\bU_{\bB}|\bSigma_{\bB}|^{1/2}$, where $\bSigma_{\bB}$ is diagonal containing the $d$ non-zero eigenvalues of $\bB$, and $\bB$ has spectral decomposition $\bB= \bU_{\bB}\bSigma_{\bB}\bU_{\bB}^\top$. It may help to remember that $p+q = d = \rank(\bB) \leq K$. By letting $X_i = \mathrm{v}_{Z_i}$, we find that the graph is a GRDPG, and can set $\mathcal{X} = \{\rv_1, \ldots, \rv_K\}$.

\subsubsection{The mixed membership stochastic block model} \label{sec:spesh_mmsbm}
Now, assign (at random or otherwise) to the $i$th node a probability vector $\pi_i \in \mathbb{S}^{K-1}$ where $\mathbb{S}^m$ denotes the standard $m$-simplex. Conditional on this assignment, let
\[\bA_{ij} \overset{ind}{\sim} \text{Bernoulli}\left(\bB_{Z_{i\rightarrow j} Z_{j\rightarrow i}}\right),\]
where 
\[Z_{i\rightarrow j} \overset{ind}{\sim} \text{categorical}(\pi_i) \quad \text{and}\quad  Z_{j\rightarrow i} \overset{ind}{\sim} \text{categorical}(\pi_j),\]
for $i < j$. The resulting graph is said to follow a mixed membership stochastic block model \citep{airoldi2008mixed}.

Averaging over $Z_{i\rightarrow j}$ and $Z_{j \rightarrow i}$, we can equivalently write that, conditional on $\pi_1, \ldots, \pi_n$, 
\[\bA_{ij} \overset{ind}{\sim} \text{Bernoulli}\left(\pi_i^\top \bB \pi_j\right).\]
But if $p,q,d$ and $\rv_1, \ldots, \rv_K$ are as defined previously, then $\pi_i^\top \bB \pi_j = (\sum_{k=1}^K \pi_{ik} \rv_k^\top) \indefI (\sum_{k=1}^K \pi_{jk} \rv_k) = X_i^\top \indefI X_j$, where $X_i = \sum \pi_{ik} \rv_k$. Therefore, conditional on $X_1, \ldots, X_n$, Equation~\eqref{eq:grdpg} holds, and the graph is a GRDPG with latent positions $X_1, \ldots, X_n$. These live in the convex hull of $\rv_1, \ldots, \rv_K$, a $(K-1)$-simplex if $\bB$ has full rank ($d=K$).

\begin{figure}
\centering
\includegraphics[width=7cm]{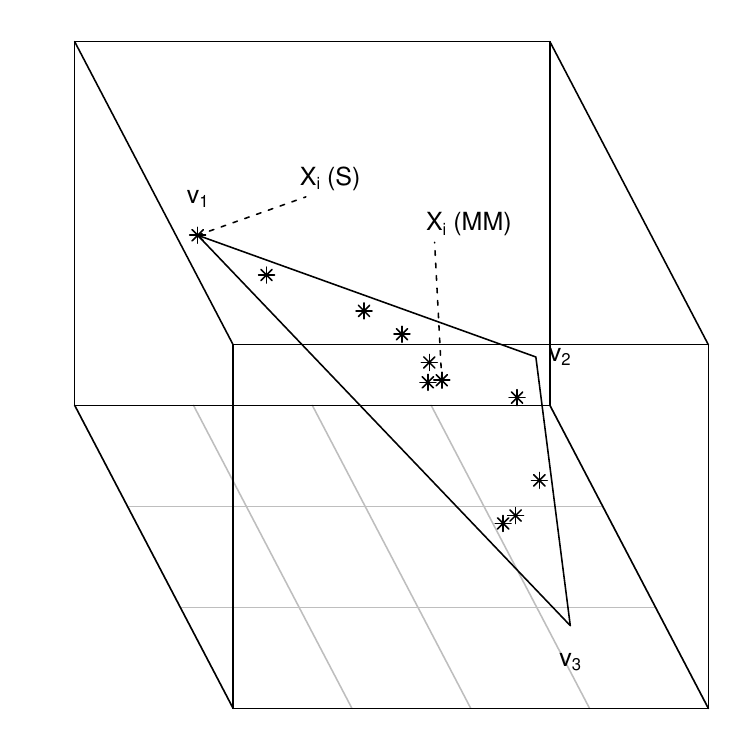}
\caption{Illustration of standard (S) and mixed membership (MM) stochastic block models as special cases of the GRDPG model (Definition~\ref{def:GRDPG}). The models have $K=3$ communities and so the corresponding GRPDG will require $d=3$ dimensions (or fewer, if the block matrix has low rank). The points $\rv_1, \ldots, \rv_K$ represent communities. Under the standard stochastic block model, the $i$th node is assigned to a single community so that $X_i \in \{\rv_1, \ldots, \rv_K\}$. Under mixed membership, if the $i$th node has a community membership probability vector $\pi_i$, then its position in latent space, $X_i$, is the corresponding convex combination of $\rv_1, \ldots, \rv_K$, that is, $X_i = \sum_{k=1}^K \pi_{ik} \rv_k$.} \label{fig:mmsbm_illustration}
\end{figure}

The GRDPG model therefore gives the standard and mixed membership stochastic block models a natural spatial representation in which $\rv_1, \ldots, \rv_K$ represent communities, and latent positions in between them represent nodes with mixed membership. This is illustrated in Figure~\ref{fig:mmsbm_illustration}.

\citet{airoldi2008mixed} set $\pi_1, \ldots, \pi_n \overset{i.i.d.} \sim \text{Dirichlet}(\alpha)$ for some  $\alpha \in \R_+^K$. The corresponding latent positions $X_1, \ldots, X_n$ are then a) also i.i.d., and b) fully supported on the convex hull of $\rv_1, \ldots, \rv_K$. Consistency of simplex-fitting, Algorithm~\ref{alg:mmspec} (Section~\ref{sec:spectral_embedding}) and illustrated in Figure~\ref{fig:SBM_MMSBM_superplot}e), relies on these two points, without requiring a Dirichlet distribution assumption.

\subsubsection{The degree-corrected stochastic block model}
Instead, assign (at random or otherwise) to the $i$th node a weight $w_i \in [0,1]$. Conditional on this assignment, let
\[\bA_{ij} \overset{ind}{\sim} \text{Bernoulli}\left(w_i w_j \bB_{Z_i Z_j}\right),\]
for $i < j$. The resulting graph is said to follow a degree-corrected stochastic block model \citep{karrer2011stochastic}.

With $p,q,d$ and $\rv_1, \ldots, \rv_K$ as defined previously, a corresponding GRDPG is constructed by letting $X_i = w_i \mathrm{v}_{Z_i}$, which lives on one of $K$ rays emanating from the origin.

\subsection{Identifiability}\label{sec:identifiability}
In the definition of the GRDPG, it is clear that the conditional distribution of $\bA$ given $X_1, \ldots, X_n$ would be unchanged if $X_1, \ldots, X_n$ were replaced by  $\bQ X_1, \dots, \bQ X_n$, for any matrix $\bQ \in \indefO = \{\bM \in \R^{d \times d}\ :\ \bM^{\top} \indefI \bM = \indefI\}$, known as the indefinite orthogonal group. The vectors $X_1, \ldots, X_n$ are therefore identifiable from $\bA$ only up to such transformation.

The property of identifiability up to \emph{orthogonal} transformation, that is, by a matrix $\bW \in \mathbb{O}(d) = \{\bM \in \R^{d \times d}\ :\ \bM^{\top} \bM = \bI\}$ is encountered in many statistical applications and occurs when $q = 0$. This unidentifiability property often turns out to be moot since inter-point distances are invariant under the action of a common orthogonal transformation, and many statistical analyses (such as $K$-means clustering) depend only on distance. When $q>0$, the transformation is \emph{indefinite orthogonal} and can affect inter-point distances. This is illustrated in Figure~\ref{fig:identifiability} with a GRDPG of signature $(1,2)$. The group  $\defO(1,2)$ contains rotation matrices
\begin{equation*}
r_{t} = \left[\begin{array}{ccc}1 & 0 & 0 \\ 0 & \cos t & -\sin t \\ 0 & \sin t & \cos t \end{array}\right],
\end{equation*}
but also hyperbolic rotations
\begin{equation*}
\rho_{\theta} = \left[\begin{array}{ccc}\cosh \theta & \sinh \theta & 0 \\ \sinh \theta & \cosh \theta & 0 \\ 0 & 0 & 1 \end{array}\right],
\end{equation*}
as can be verified analytically. A rotation $r_{\pi/3}$ is applied to three GRDPG latent positions to get from the top-left to the top-right panel in Figure~\ref{fig:identifiability}. Hyperbolic rotations $\rho_{\theta}$ ($\theta = 1.3$, chosen arbitrarily) and $\rho_{-\theta}$ take the positions from the top-left to the bottom-left and from the top-right to the bottom-right panels, respectively. These transformations alter inter-point distances: in the bottom row, the blue position is closer to the green on the left and closer to the red on the right; the three positions are equidistant in the top row.

\begin{figure}
\centering
\includegraphics[width=7cm]{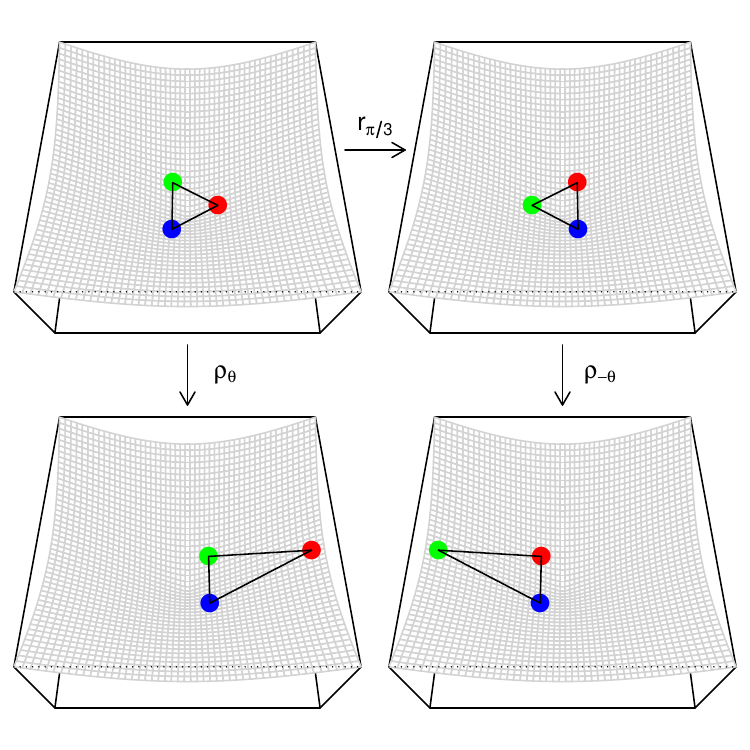}
\caption{Identifiability of the latent positions of a GRDPG with signature $(1,2)$. In each panel, the three coloured points represent latent positions $X_1$, $X_2$ and $X_3$, which live inside the cone $\{x \in \R^3: x^\top \bI_{1,2} x = 0\}$ (grey mesh). The positions are only identifiable up to transformation by a matrix in the indefinite orthogonal group $\mathbb{O}(1,2)$. This group includes some rotations (e.g., that used to go from the top-left to top-right panel), but also hyberbolic rotations (e.g., going from top-left to bottom-left and top-right to bottom-right). The observed graph is equally likely under those four latent position configurations, and so the configurations cannot be distinguished. Some of these transformations affect inter-point distances. In the bottom row, the blue position is closer to the green on the left, whereas it is closer to the red on the right; the three positions are equidistant in the top row.}\label{fig:identifiability}
\end{figure}

\subsection{Alternative parameterisations}\label{sec:alternative_parameterisations}
In this section we discuss alternative, equivalent parameterisations, explaining why we opted for the GRDPG without claiming objective superiority.

It may be observed that the only ambiguity in computing the spectral embedding $\hat{\bX}$ is how the principal eigenvectors for $\bA$ are chosen. If we assume repeated non-zero eigenvalues are rare in real data, this choice in practice is typically limited to the option of reversing any eigenvector. A model for interpreting spectral embedding might have been expected to reflect only this kind of ambiguity in its unidentifiability.

The model structure can indeed be brought closer to the spectral decomposition of $\bA$ by defining an alternative, `spectral' estimand, $\tilde{\bX} = [\tilde X_{1},\dots, \tilde X_{n}]^{\top}= \bU|\bS|^{1/2} \in \R^{n \times d}$, where $\bS\in\R^{d \times d}$ is a diagonal matrix containing the non-zero eigenvalues of $\bP = \bX \indefI \bX^\top$, in decreasing order, and $\bU \in \R^{n \times d}$ contains corresponding orthonormal eigenvectors as columns. With the help of a follow-on paper \citep{agterberg2020nonidentifiability}, the vector $\hat X_i$ will be found to estimate $\tilde X_{i}$ \emph{up to orthogonal transformation}, uniformly and with asymptotically Gaussian error, or, under a distinct eigenvalue assumption, \emph{up to reflection of the axes} (see Section~\ref{sec:results}).

The object $\tilde{\bX}$ has special structure which, for example, precludes its rows $\tilde X_1, \ldots, \tilde X_n$ from being i.i.d.; under a stochastic block model, the $K$ unique vector values taken by $\tilde X_1, \ldots, \tilde X_n$, representing the communities, cannot be determined from only $\bB$, because they depend on $Z_1, \ldots, Z_n$ --- so, for example, those $K$ vectors may change as $n$ grows. These are some reasons why we choose to model $\tilde{\bX}$ through $\bX$ (whence $\bP$), rather than as a standalone object. However, a reader only interested in estimating $\tilde X_{i}$ can ignore indefinite orthogonal transformations, which only appear when we try to relate $\tilde X_{i}$ to $X_i$.

A convention could be imposed on $\mathcal{F}$ to make $X_i$ and $\tilde X_{i}$ converge to each other, up to orthogonal transformation, and proposals to this effect were made in a follow-on paper \citep{agterberg2020nonidentifiability}. However, convergence is not fast enough, when it comes to the central limit theorem, to substitute $\tilde X_{i}$ by $X_i$ and avoid indefinite orthogonal transformations (see Section~\ref{sec:discussion}). Moreover, under such a convention, the construction of vector representatives of the $K$ communities under the stochastic block model and extensions is more involved, compared to our canonical construction in Section~\ref{sec:spesh_sbm}, and will vary depending on the distributions of $Z_1, \ldots, Z_n$, degree-correction weights $w_1, \ldots, w_n$, and community membership probabilities $\pi_1, \ldots, \pi_n$. 

In the context of graph simulation, \citet{rohe2018note} proposed to generalise the random dot product graph via a latent position model with kernel  $f(x,y) = x^\top \boldsymbol \Lambda y$, where $\boldsymbol \Lambda \in \R^{d \times d}$ is a symmetric matrix. Alternatively, in the spirit of the eigenmodel of \citet{hoff2008modeling}, one might consider enforcing $\boldsymbol \Lambda$ to be diagonal. In either case, the model's latent positions, $Y_i$ say, can be transformed into the latent positions of an equivalent GRDPG with signature $(p,q)$, via $X_i = \mathbf{L} Y_i$, given a decomposition $\boldsymbol{\Lambda} = \mathbf{L}^{\top} \indefI \mathbf{L}$ for some $p,q$ (such as that proposed for $\bB$ in Section~\ref{sec:spesh_sbm}). If $\boldsymbol \Lambda$ is only assumed symmetric and the $Y_i$ restricted only to give valid probabilities (i.e. to belong to a set $\mathcal{Y}$ in which $x^{\top}\boldsymbol \Lambda y \in [0,1]$ for all $x,y \in \mathcal{Y}$), then $Y_i$ are identifiable only up to invertible linear transformation, since we can replace $Y_1, \ldots, Y_n$ with $\mathbf{M} Y_1, \ldots, \mathbf{M} Y_n$ and $\boldsymbol \Lambda$ with $\mathbf{M}^{-\top} \boldsymbol \Lambda \mathbf{M}^{-1}$, for any invertible matrix $\mathbf{M}$, without changing the conditional distribution of $\bA$. If the model of \citet{rohe2018note} is preferred, the results of Section~\ref{sec:results} can be re-interpreted to say that $\hat X_i$ estimates $Y_i$ \emph{up to invertible linear transformation}, uniformly and with asymptotically Gaussian error: in the theorems of Section~\ref{sec:results} we would replace $\bQ \hat X_i$ with $\mathbf{L}^{-1}\bQ \hat X_i$ (assuming $\boldsymbol \Lambda$ has full rank), $X_i$ with $Y_i$, updating the covariance matrices accordingly.

\section{Asymptotics}\label{sec:spectral_embedding}
This section describes the asymptotic statistical properties of GRDPG latent position estimates obtained by spectral embedding. 
\subsection{Results for adjacency spectral embedding}\label{sec:results}
In this section, the spectral estimates $\hat X_1, \ldots, \hat X_n$ are shown to converge to $X_1, \ldots, X_n$ in two standard statistical senses: uniformly, and with asymptotically Gaussian error. Analogous results for Laplacian spectral embedding are given in Section~\ref{sec:lse_results}.

For a given $n$, the latent positions are assumed to be independent and identically distributed. As $n \rightarrow \infty$, their distribution is either fixed or, to produce a regime in which the average node degree grows less than linearly in $n$, it is made to shrink. This is done by letting $X_i = \rho^{1/2}_n\xi_i$, where $\xi_i \overset{i.i.d.}{\sim} F$, for some distribution $F$ on $\R^d$, and allowing the cases $\rho_n = 1$ or $\rho_n \rightarrow 0$ sufficiently slowly. The generic joint distribution $\mathcal{F}$ occurring in Definition~\ref{def:GRDPG} is therefore assumed to factorise into a product of $n$ identical marginal distributions that are equal to $F$ up to scaling. The dimension of the model, $d$, is assumed to have been chosen `economically' in the sense that, for $\xi \sim F$, the second moment matrix $\bDelta = \mathbb{E}(\xi \xi^{\top}) \in \R^{d \times d}$ has full rank. Here $d$ is viewed as fixed and known, so for simplicity we suppress $d$-dependent factors in the statements of our theorems. Our proofs, however, keep track of $d$.

Since the average node degree grows as $n \rho_n$, the cases $\rho_n = 1$ and $\rho_n \rightarrow 0$ can be thought to respectively produce dense and sparse regimes and $\rho_n$ is called a sparsity factor. No algorithm can produce uniformly consistent estimates of $X_1, \ldots, X_n$ if the average node degree grows less than logarithmically. Indeed, if one did, it could be used to break the information-theoretic limit for perfect community recovery under the stochastic block model \citep{abbe2017community}. Our results hold under polylogarithmic growth.

Recall that we have also defined a `spectral' estimand, $\tilde X_i$, identifiable up to orthogonal rather than indefinite orthogonal transformation (see Section~\ref{sec:alternative_parameterisations}). To move between $\hat X_i$, $\tilde X_i$ and $X_i$, we introduce the transformations:
\begin{enumerate}
\item $\bW_{\star} \in \defO(d) \bigcap\indefO$: a block orthogonal matrix of which the first $p \times p$ block (respectively second $q \times q$ block) aligns the first $p$ (respectively second $q$) columns of $\bU$ with the first $p$ (respectively second $q$) columns of $\hat \bU$, solving the two orthogonal Procrustes problems independently (explicit construction in the Appendix).
\item $\bQ_{\bX} \in \indefO$: an indefinite orthogonal matrix solving $\bX = \tilde{\bX}\bQ_{\bX}$.
\end{enumerate}
We will find that $\bW_{\star} \hat X_i$ converges to $\tilde{X}_i$ and that $\bQ \hat X_i$ converges to $X_i$, where  $\bQ = \bQ_{\bX}^\top \bW_{\star}$.
\begin{theorem}[Uniform consistency of adjacency spectral embedding]
  \label{thrm:GRDPG_ASE_ttinf}
  There exists a universal constant $c > 1$ such that, provided the sparsity factor satisfies $n\rho_{n} =\omega\{\log^{4c} n\}$,
  \begin{equation*}
      \underset{i \in \{1, \ldots, n\}}{\textnormal{max}}\|\mathbf{W}_{\star} \hat X_i - \tilde X_i\|  = \probO\left(\tfrac{\log^c n}{n^{1/2}}\right); \quad
  \underset{i \in \{1, \ldots, n\}}{\textnormal{max}}\|\mathbf{Q} \hat X_i - X_i\|  = \probO\left(\tfrac{\log^c n}{n^{1/2}}\right).
\end{equation*}
\end{theorem}
We say that a random variable $Y$ is $\probO(f(n))$ if, for any positive constant $c>0$ there exists an integer $n_{0}$ and a constant $C>0$ (both of which possibly depend on $c$) such that for all $n \ge n_{0}$, $|Y|\le Cf(n)$ with probability at least $1-n^{-c}$. We write that sequences $a_n = \omega(b_n)$ when there exist a positive constant $C$ and an integer $n_0$ such that $a_n \geq C b_n$ for all $n \geq n_0$ and $a_n/b_n \rightarrow \infty$.

As the graph grows, we will now look at a fixed, finite subset of the nodes, indexed $1, \ldots, m$ without loss of generality, to obtain a central limit theorem on the corresponding errors.
\begin{theorem}[Adjacency spectral embedding central limit theorem]
  \label{thrm:GRDPG_ASE_CLT}
  Assume the same sparsity conditions as Theorem~\ref{thrm:GRDPG_ASE_ttinf}.  Conditional on $X_i$, for $i = 1, \ldots, m$, the random vectors $\sqrt{n} (\mathbf{Q} \hat X_i - X_i) = \sqrt{n} \mathbf{Q}_{\mathbf{X}}^\top(\mathbf{W}_{\star} \hat X_i - \tilde X_i)$ converge in distribution to independent zero-mean Gaussian random vectors with covariance matrix $\bSigma(\xi_i)$ respectively, where
  \begin{equation*}
    \bSigma(x) =
	    \begin{cases} 
	    \indefI\bDelta^{-1}\Ex[(x^{\top}\indefI \xi)(1-x^{\top}\indefI \xi)\xi\xi^{\top}]\bDelta^{-1}\indefI & \textnormal{if } \rho_{n}=1, \\
	    \indefI\bDelta^{-1}\Ex[(x^{\top}\indefI \xi)\xi\xi^{\top}]\bDelta^{-1}\indefI & \textnormal{if } \rho_{n} \rightarrow 0.
	    \end{cases}
  \end{equation*}
\end{theorem}
The matrix $\mathbf{Q}_{\mathbf{X}}$ has, since the first edition of this paper, been shown to converge \citep{agterberg2020nonidentifiability}, in the sense that for each $n$ one can construct an orthogonal matrix $\mathbf{W}_{\star\star}$ such that $\mathbf{W}_{\star\star} \mathbf{Q}_{\mathbf{X}} \rightarrow \bQ_0$ almost surely, where $\bQ_0 \in \indefO$ is a fixed matrix, made explicit in \citet{agterberg2020nonidentifiability}. The second of these observed that this made a central limit theorem ``up to orthogonal transformation'' possible: in the above, we can replace  $\sqrt{n} (\mathbf{Q} \hat X_i - X_i)$ with $\sqrt{n}\mathbf{W}_{\star\star}(\mathbf{W}_{\star}\hat X_i - \tilde X_i)$ and  $\bSigma(\xi_i) $ with $\bQ^{-\top}_0 \bSigma(\xi_i) \bQ_0^{-1}$. Moreover, if the eigenvalues of $\Delta \indefI$ are distinct, the matrices $\mathbf{W}_{\star}$ and $\mathbf{W}_{\star\star}$ can be taken to be diagonal with entries $1$ or $-1$, reflecting that any eigenvector can be reversed in the spectral decompositions of $\bP$ and $\bA$.

\begin{remark}[Proof overview]
  \label{rem:ASE_proof_overview}
  Theorems~\ref{thrm:GRDPG_ASE_ttinf}~and~\ref{thrm:GRDPG_ASE_CLT} are proved in succession within a unified framework. The proof begins with a collection of matrix perturbation decompositions which eventually yield the relation
  \begin{equation*}
    \hat{\bU}|\hat{\bS}|^{1/2}
    = \bU|\bS|^{1/2}\bW_{\star}
    + (\bA-\bP)\bU|\bS|^{-1/2}\bW_{\star}\indefI + \bR
  \end{equation*}
  for some residual matrix $\bR \in \R^{n \times d}$. 
  Appropriately manipulating the above display equation subsequently yields the important identity
  \begin{equation*}
    n^{1/2}(\hat{\bX}\bW_{\star}^{\top}\bQ_{\bX} - \bX)
    = n^{1/2}(\bA-\bP)\bX(\bX^{\top}\bX)^{-1}\indefI + n^{1/2}\bR\bW_{\star}^{\top}\bQ_{\bX},
  \end{equation*}
  Theorem~\ref{thrm:GRDPG_ASE_ttinf} is then established by bounding the maximum Euclidean row norm (equivalently, the two-to-infinity norm \citep{cape2017two}) of the right-hand side of the above display equation sufficiently tightly. Theorem~\ref{thrm:GRDPG_ASE_CLT} is established with respect to the same transformation $\bQ$ by showing that, conditional on the $i$th latent position, i.e., $i$th row of $\bX$, the classical multivariate central limit theorem can be invoked for the $i$th row of the matrix $n^{1/2}(\bA-\bP)\bX(\bX^{\top}\bX)^{-1}\indefI$, whereas the remaining residual term has vanishing two-to-infinity norm. The technical tools involved include a careful matrix perturbation analysis involving an infinite matrix series expansion of $\hat{\bU}$, probabilistic concentration bounds for $(\bA-\bP)^{k}\bU, 1\le k \le \log n$, delicately passing between norms, and indefinite orthogonal matrix group considerations.
	
	The joint proof of Theorems~\ref{thrm:GRDPG_ASE_ttinf}~and~\ref{thrm:GRDPG_ASE_CLT} captures the novel techniques and necessary additional considerations for moving beyond random dot product graphs considered in previous work to generalised random dot product graphs. The proofs of Theorem~\ref{thrm:GRDPG_LSE_ttinf}~and~Theorem~\ref{thrm:GRDPG_LSE_CLT} (for Laplacian spectral embedding), while laborious, follow \emph{mutatis mutandis} by applying the aforementioned proof considerations within the earlier work and context of the Laplacian spectral embedding limit theorems proven in \citet{tang2016limit}. For this reason, we elect to state those theorems without proof.
\end{remark}

\subsection{Discussion} \label{sec:discussion}
Could we remove all notion of indefinite orthogonal transformation from our results? The answer is yes if we consider only the `spectral' estimand, $\tilde X_i$, but we have not found a way of describing $\hat X_i$ as asymptotically ``Gaussian with mean $X_i$'', where $X_i$ are i.i.d., regardless of sparsity, without invoking indefinite orthogonal transformations. Among equivalent latent position distributions --- equal up to indefinite orthogonal transformation (by push-forward) --- we can choose $F$ such that $\mathbf{W}_{\star\star} \tilde X_i$ and $X_i$ are asymptotically equal \citep{agterberg2020nonidentifiability}. However, the matrix $\mathbf{Q}_{\mathbf{X}}$ does not appear to converge faster than $n^{-1/2}$. As a result, however we choose $F$, the error between $\mathbf{W}_{\star\star}\tilde X_i$ and $X_i$ may not vanish when scaled by $\sqrt{n}$.

It is perhaps remarkable how often the presence of indefinite orthogonal transformation will turn out not to matter. First, a follow-on inference procedure, e.g. for cluster analysis, may happen to be invariant to such a transformation of its input data. A key example is fitting a Gaussian mixture model, which we will shortly discuss in more detail. Second, even if the follow-on procedure is not invariant, it may still be consistent. Indeed, there is nothing in our results disputing the consistency of spectral clustering using $K$-means clustering \citep{rohe2011spectral}. Our uniform consistency result allows us to reprove this, and in the same movement prove the consistency of simplex-fitting (Section~\ref{sec:mmsbm_illustration}) under the mixed membership stochastic block model, given some control on the behaviour of $\bQ$, which we now provide.

\begin{lemma}
  \label{lem:indefiniteSpectralBoundGRDPG}
  The matrix $\mathbf{Q}_{\mathbf{X}}$ has bounded spectral norm almost surely.
\end{lemma}
The same can be said of $\mathbf{Q}$, since $\bW_{*}$ is orthogonal, and of $\mathbf{Q}^{-1} = \indefI \bQ^\top \indefI$.

\begin{proof}[Proof of Lemma~\ref{lem:indefiniteSpectralBoundGRDPG}]
 The matrices $\bS$ and $\bX\indefI\bX^{\top}$ have common spectrum by definition which is further equivalent to the spectrum of $\bX^{\top}\bX\indefI$, since for any conformable matrices $\bM_{1}, \bM_{2}$, $\textrm{spec}(\bM_{1}\bM_{2}) = \textrm{spec}(\bM_{2}\bM_{1})$, excluding zero-valued eigenvalues. By the law of large numbers, $(n\rho_n)^{-1}(\bX^{\top}\bX) \rightarrow \Ex(\xi\xi^{\top})$ almost surely, and so $(n\rho_n)^{-1}(\bX^{\top}\bX\indefI) \rightarrow \Ex(\xi\xi^{\top})\indefI$. It follows that both $(n\rho_n)^{-1}\|\bX^{\top}\bX\|$ and $(n\rho_{n})^{-1}\textrm{min}_{i}|\bS_{ii}|$ converge to positive constants almost surely as $n\rightarrow\infty$. 
 
 Now for $\bQ_{\bX}$ as in the hypothesis, with respect to Loewner order $\bQ_{\bX}^{\top}(\textrm{min}_{i}|\bS_{ii}|\bI)\bQ_{\bX} \le \bQ_{\bX}^{\top}|\bS|\bQ_{\bX}$, where $\bQ_{\bX}^{\top}|\bS|\bQ_{\bX} = \bX^{\top}\bX$. Hence, $\textrm{min}_{i}|\bS_{ii}|\|\bQ_{\bX}\|^{2}=\|\bQ_{\bX}^{\top}(\textrm{min}_{i}|\bS_{ii}|)\bQ_{\bX}\|\le\|\bX^{\top}\bX\|$, from which the claim follows.
\end{proof}

Apart from converging slowly, the limiting $\mathbf{Q}_{\mathbf{X}}$ depends on $F$. Thus, a typical situation where the presence of indefinite orthogonal transformation \emph{does} matter is when comparing two graphs under a null hypothesis in which the latent position distributions $F_1 \neq F_2$. In the Appendix, we provide two, two-graph examples, following standard and degree-corrected stochastic block models respectively, where the block matrices are equal, but the community sizes or degree distributions differ.

\subsection{Results for Laplacian spectral embedding}\label{sec:lse_results}
Analogous results are now given for the case of Laplacian spectral embedding. Here, the estimand is defined as
\[\tfrac{X_i}{\sqrt{\sum_{j}{X_i^{\top}}\indefI X_j}},\]
a latent position normalised according to its expected degree. As before, the estimate $\breve X_{i}$ will only resemble its estimand after indefinite orthogonal transformation by a matrix $\breve{\mathbf{Q}}  \in \indefO$, constructed in an analogous fashion to $\mathbf{Q}$. To avoid more definitions and notation, we will forego defining a `spectral' estimand, as we did with adjacency spectral embedding. 

\begin{theorem}[Uniform consistency of Laplacian spectral embedding]
    \label{thrm:GRDPG_LSE_ttinf}
    Assume the same sparsity conditions as Theorem~\ref{thrm:GRDPG_ASE_ttinf}.  Then,
    	\begin{equation*}
	\underset{i \in \{1, \ldots, n\}}{\textnormal{max}}\left\|\breve{\mathbf{Q}}\breve X_i - \tfrac{X_i}{\sqrt{\sum_{j}{X_i^{\top}}\indefI X_j}}\right\|  = \probO\left(\tfrac{\log^c n}{n \rho_{n}^{1/2}}\right).
	\end{equation*}
\end{theorem}

\begin{theorem}[Laplacian spectral embedding central limit theorem]
  \label{thrm:GRDPG_LSE_CLT}
  Assume the same sparsity conditions as Theorem~\ref{thrm:GRDPG_ASE_ttinf}. Conditional on $X_i$, for $i = 1, \ldots, m$, the random vectors
  \[n \rho_{n}^{1/2} \left(\breve{\mathbf{Q}}\breve X_i-\tfrac{X_i}{\sqrt{\sum_{j}{X_i^{\top}}\indefI X_j}}\right),\]
  converge in distribution to independent zero-mean Gaussian random vectors with covariance matrix $\breve{\bSigma}(\xi_i)$ respectively, where
  \begin{equation*}
    \breve{\bSigma}(x) = 
    \begin{cases}
		\indefI\breve{\bDelta}^{-1}
	    \Ex\left\{
	      \left(\tfrac{x^{\top} \indefI \xi(1- x^{\top} \indefI \xi)}{x^{\top} \indefI \mu }\right)
	      \left(\tfrac{\xi}{\mu^{\top}\indefI \xi} - \tfrac{\breve{\bDelta}\indefI x}{2\mu^{\top}\indefI x}\right)
	      \left(\tfrac{\xi}{\mu^{\top}\indefI \xi} - \tfrac{\breve{\bDelta}\indefI x}{2\mu^{\top}\indefI x}\right)^{\top}\right\}\breve{\bDelta}^{-1}\indefI,
	      & \textnormal{if } \rho_{n}=1,\\
	    \indefI\breve{\bDelta}^{-1}
	    \Ex\left\{
	    \left(\tfrac{x^{\top} \indefI \xi}{x^{\top} \indefI \mu }\right)
	    \left(\tfrac{\xi}{\mu^{\top}\indefI \xi} - \tfrac{\breve{\bDelta}\indefI x}{2\mu^{\top}\indefI x}\right)
	    \left(\tfrac{\xi}{\mu^{\top}\indefI \xi} - \tfrac{\breve{\bDelta}\indefI x}{2\mu^{\top}\indefI x}\right)^{\top}\right\}\breve{\bDelta}^{-1}\indefI
	    & \textnormal{if } \rho_{n} \rightarrow 0,
	    \end{cases}
  \end{equation*}
  and  $\mu = \Ex(\xi)$, $\breve{\bDelta}=\Ex\left(\tfrac{\xi\xi^{\top}}{\mu^{\top}\indefI \xi}\right)$.
\end{theorem}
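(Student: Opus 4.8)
The plan is to follow the argument of \citet{tang2016limit} for the random dot product graph, substituting the indefinite inner product $\indefI$ for the Euclidean one and reusing the GRDPG-specific machinery already in place. The guiding observation is that the normalised Laplacian is, to leading order, the adjacency-type matrix of a \emph{rescaled} GRDPG. Writing $\bP = \bX^{(n)}\indefI\bX^{(n)\top}$ for the low-rank population edge-probability matrix, $t_i = \sum_j {X_i^{(n)}}^{\top}\indefI X_j^{(n)}$ for the population degrees and $\bT = \diag(t_1, \ldots, t_n)$, the population Laplacian $\mathcal{L} = \bT^{-1/2}\bP\bT^{-1/2}$ factorises as $\mathcal{L} = \bY\indefI\bY^{\top}$ with rows $Y_i^{(n)} = X_i^{(n)}/\sqrt{t_i}$, which is exactly the centring term of the theorem. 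Thus $\breve X_i^{(n)}$ should estimate $\tilde{\bQ}_n Y_i^{(n)}$; Theorem~\ref{thrm:GRDPG_LSE_ttinf} already guarantees consistency at this centre, and the task is to refine it to the distributional scale $n\rho_n^{1/2}$.

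I would first show that the empirical degrees concentrate, $\max_i |\bD_{ii}/t_i - 1| = \probo(1)$, so the expansion $\bD_{ii}^{-1/2} = t_i^{-1/2}\{1 - \tfrac{1}{2}(\bD_{ii}-t_i)/t_i + \cdots\}$ is valid with a controllable remainder. Substituting into $\bL = \bD^{-1/2}\bA\bD^{-1/2}$ and subtracting $\mathcal{L}$ exposes a leading linear functional of the centred entries $\bA - \bP$ carrying \emph{two} contributions: the direct fluctuation $\bA_{ij}-\bP_{ij}$ rescaled by $(t_i t_j)^{-1/2}$, and an indirect fluctuation propagated through the degree $\bD_{ii}-t_i = \sum_j(\bA_{ij}-\bP_{ij})$, whose derivative factor supplies the weight $-\tfrac{1}{2}$. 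Following the eigenvector-perturbation analysis of the adjacency case (Theorem~\ref{thrm:GRDPG_ASE_CLT}), I would then write the row-wise error $\breve X_i^{(n)} - \tilde{\bQ}_n Y_i^{(n)}$ as this linear functional acted on by $\indefI\tilde{\bDelta}^{-1}$, where $\tilde{\bDelta} = \Ex(\xi\xi^{\top}/\mu^{\top}\indefI\xi)$ arises as the limit of $\bY^{\top}\bY$, plus residual terms shown to be of smaller order using the Laplacian two-to-infinity bound (Theorem~\ref{thrm:GRDPG_LSE_ttinf}) and the spectral-norm control of $\tilde{\bQ}_n$ (Lemma~\ref{lem:indefiniteSpectralBoundGRDPG}), exactly as in the adjacency proof.

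Conditioning on $\xi_i^{(n)} = x_i$ and applying a multivariate Lindeberg--Feller central limit theorem to the leading functional then delivers the Gaussian limit. A short computation shows that, after factoring out a common scalar, the coefficient attached to each $\bA_{ik}-\bP_{ik}$ is the direct term $\xi_k/(\mu^{\top}\indefI\xi_k)$ minus the degree correction $\tfrac{1}{2}\,\tilde{\bDelta}\indefI x/(\mu^{\top}\indefI x)$, which is precisely the bracketed vector in $\tilde{\bSigma}(x)$; the outer $\indefI\tilde{\bDelta}^{-1}(\cdot)\tilde{\bDelta}^{-1}\indefI$ sandwich is the image of the edge-variance covariance under the linear map above, and the per-edge variance $\bP_{ik}(1-\bP_{ik})$ supplies the factor $x^{\top}\indefI\xi(1-x^{\top}\indefI\xi)$ in the dense regime and $x^{\top}\indefI\xi$ in the sparse regime, with the accompanying scalar producing the $x^{\top}\indefI\mu$ in the denominator. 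Asymptotic independence across the fixed set $i \le m$ holds because the corresponding functionals share only a vanishing fraction of summands.

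The main obstacle I anticipate is the degree normalisation, which couples every row of $\bL$ through the shared random factor $\bD^{-1/2}$ and so destroys the clean row-independence exploited in the adjacency argument. Linearising $\bD^{-1/2}$ correctly, tracking the cross-correlation between the direct and degree-mediated fluctuations so as to recover the exact $-\tfrac{1}{2}$ correction, and verifying that the quadratic and higher-order remainders are $\probo\{(n\rho_n^{1/2})^{-1}\}$ uniformly in $i$---all while keeping explicit track of the powers of the sparsity factor $\rho_n$---is where the genuine work lies; the remainder is a faithful transcription of the RDPG argument with the Euclidean inner product replaced by $\indefI$.
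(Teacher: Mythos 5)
Your proposal takes essentially the same approach as the paper: the paper in fact states Theorem~\ref{thrm:GRDPG_LSE_CLT} \emph{without proof}, asserting (in Remark~\ref{rem:ASE_proof_overview}) that it follows \emph{mutatis mutandis} by combining the GRDPG-specific adjacency analysis of Theorems~\ref{thrm:GRDPG_ASE_ttinf} and~\ref{thrm:GRDPG_ASE_CLT} with the Laplacian machinery of \citet{tang2016limit}. Your sketch --- factorising the population Laplacian to obtain the centring $X_i^{(n)}/\sqrt{\sum_j {X_i^{(n)}}^{\top}\indefI X_j^{(n)}}$, linearising $\bD^{-1/2}$ to produce the $-\tfrac{1}{2}$ degree correction, identifying $\tilde{\bDelta}$ as the limit of the scaled Gram matrix of the normalised positions, and closing with a conditional multivariate CLT while controlling residuals via Theorem~\ref{thrm:GRDPG_LSE_ttinf} and Lemma~\ref{lem:indefiniteSpectralBoundGRDPG} --- is precisely that deferred route, so there is no discrepancy to report.
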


\section{Implications for stochastic block model estimation}\label{sec:implications}
The asymptotic results of Section~\ref{sec:spectral_embedding} justify the use of the following high-level algorithms \citep{athreya2016limit,tang2016limit,rubin2017consistency} for standard and mixed membership stochastic block model estimation, previously only formally supported under non-negative-definite assumptions on the block matrix $\bB$. 
\begin{algorithm}[H]
\caption{Fitting a stochastic block model (spectral clustering)}\label{alg:spec}
\begin{algorithmic}[1]
  \Statex \textbf{input} adjacency matrix $\bA$, dimension $d$, number of communities $K \geq d$
  \State compute spectral embedding $\hat X_1, \ldots, \hat X_n$ of the graph into $\R^d$
  \State fit a Gaussian mixture model (varying volume, shape, and orientation) with $K$ components
  \Statex \Return cluster centres $\hat \rv_1, \ldots, \hat \rv_K$ and community memberships $\hat Z_1, \ldots, \hat Z_n$
\end{algorithmic}
\end{algorithm}
Where this algorithm differs most significantly from \citet{rohe2011spectral} is in the use of a Gaussian mixture model over $K$-means clustering. In Section~\ref{sec:sbm_illustration}, we show why Theorems~\ref{thrm:GRDPG_ASE_CLT} and~\ref{thrm:GRDPG_LSE_CLT} would recommend this modification, with a pedagogical example.

To accomplish step 2 we have been employing the mclust algorithm \citep{fraley1999mclust}, which has a user-friendly R package. In step 1, either adjacency or Laplacian spectral embedding can be used (see Definition~\ref{def:ASE_and_LSE}). If the latter, the resulting node memberships can be interpreted as alternative estimates of $Z_1, \ldots, Z_n$ but the output cluster centres should be treated as estimating degree-normalised versions of $\rv_1, \ldots, \rv_K$ (see Section~\ref{sec:lse_results}). 

\begin{algorithm}[H]
\caption{Fitting a mixed membership stochastic block model}\label{alg:mmspec}
\begin{algorithmic}[1]
  \Statex \textbf{input} adjacency matrix $\bA$, dimension $d$, number of communities $K = d$
  \State compute adjacency spectral embedding $\hat X_1, \ldots, \hat X_n$ of the graph into $\R^d$
  \State project the data onto the $(d-1)$-dimensional principal hyperplane, to obtain points $\hat Y_1, \ldots, \hat Y_n$, and fit the minimum volume enclosing $K-1$-simplex, with vertices $\hat \rv_1, \ldots, \hat \rv_K$
  \State obtain barycentric coordinates $\hat Y_i = \sum_{k = 1}^K \hat \pi_{il} \hat \rv_k$, for $i = 1, \ldots, n$
  \Statex \Return vertices $\hat \rv_1, \ldots, \hat \rv_K$ of the simplex, and estimated community membership probability vectors $\hat \pi_1, \ldots, \hat \pi_n$
\end{algorithmic}
\label{alg:mves}
\end{algorithm}
This algorithm is unchanged from \citet{rubin2017consistency}, but to prove it is consistent when $\bB$ has negative eigenvalues requires Theorem~\ref{thrm:GRDPG_ASE_ttinf} and Lemma~\ref{lem:indefiniteSpectralBoundGRDPG}. We provide a pedagogical example in Section~\ref{sec:mmsbm_illustration}.

To fit the minimum volume enclosing simplex in step 2, we have been using the algorithm by \citet{lin2016fast} and are grateful to the authors for providing code. Algorithm~\ref{alg:mmspec} can be extended to the case $K \geq d$ by fitting a minimum volume enclosing convex $K$-polytope, but for identifiability one must then assume $\hat \rv_1, \ldots, \hat \rv_K$ are in convex position.

\begin{figure}[htp]
  \centering
  \includegraphics[width=13cm]{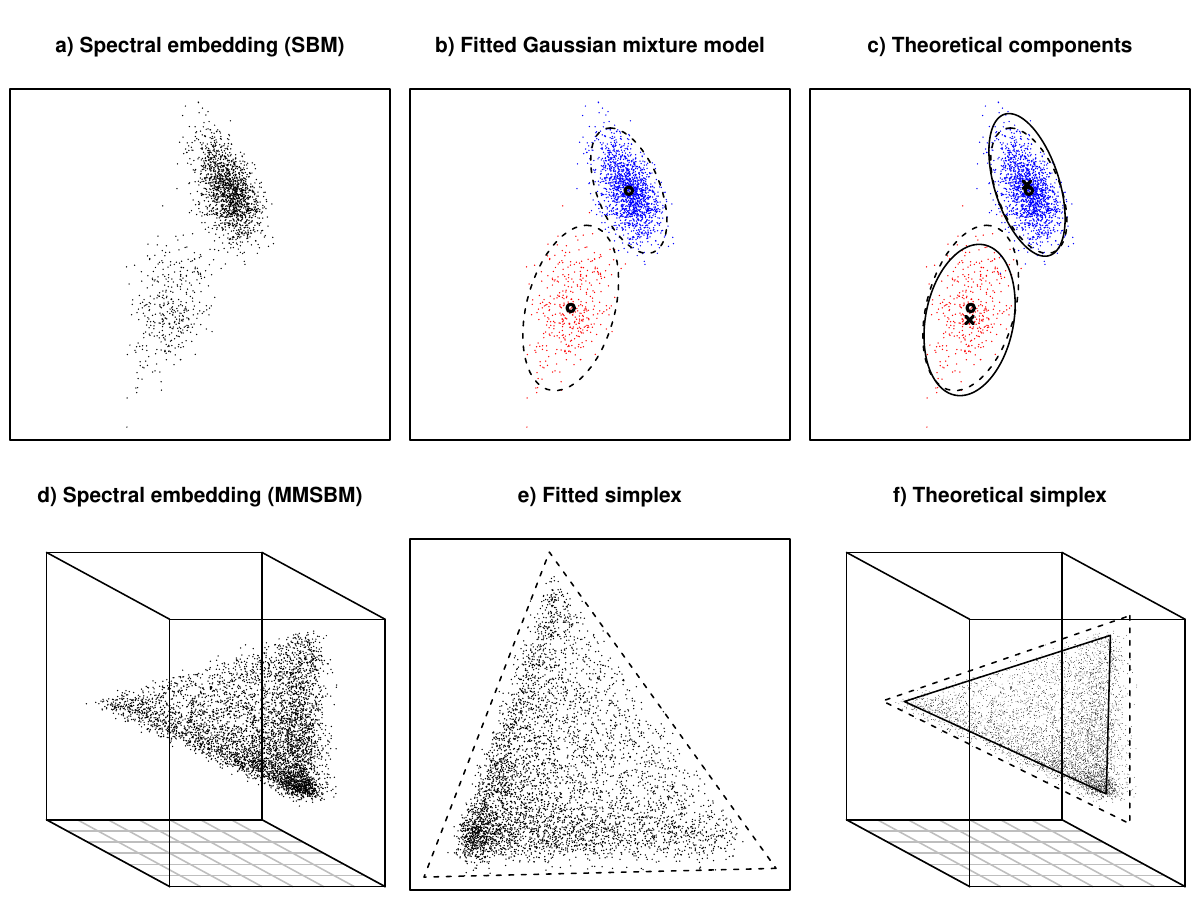}
  \caption{Spectral embedding and analysis of simulated graphs from the standard (SBM --- top) and mixed membership (MMSBM --- bottom) stochastic block models. a) Adjacency spectral embedding into $\R^2$ of a simulated graph with two communities ($n=2000$ nodes); b) two-component Gaussian mixture model (varying volume, shape, and orientation) fit using the R package mclust, coloured by estimated component assignment, with estimated component centres as circles and 95\% Gaussian contours as dashed ellipses; c) two-component Gaussian mixture model predicted by the asymptotic theory, with component centres shown as crosses and 95\% Gaussian contours shown as solid ellipses, overlaid onto the empirical centres (circles) and contours (dashed ellipses) shown in b); d) adjacency spectral embedding into $\R^3$ of a simulated graph with $3$ communities ($n=5000$ nodes); e) minimum volume enclosing simplex (dashed triangle) enclosing the two principal components (points); f) theoretical simplex supporting the latent positions (solid triangle) for comparison with the minimum volume enclosing simplex (dashed triangle). Detailed discussion in Section~\ref{sec:implications}.}
  \label{fig:SBM_MMSBM_superplot}
\end{figure}

\subsection{A two-community stochastic block model}\label{sec:sbm_illustration}
In this section, we show why the central limit theorems (Theorems~\ref{thrm:GRDPG_ASE_CLT}~and~\ref{thrm:GRDPG_LSE_CLT}) would recommend fitting a Gaussian mixture model, rather than using $K$-means, for spectral clustering. To illustrate ideas, we consider a two-community stochastic block model under which every node is independently assigned to the first or second community, with respective probabilities $0.2$ and $0.8$, and the block matrix is
\begin{equation*}
\bB^{(1)} = \left[\begin{array}{cc}0.02 & 0.03 \\ 0.03 & 0.01\end{array}\right],
\end{equation*}
which has one positive and one negative eigenvalue. The two-dimensional adjacency spectral embedding of a simulated graph on $n=2000$ nodes is shown in Figure~\ref{fig:SBM_MMSBM_superplot}a). As we will see, Theorem~\ref{thrm:GRDPG_ASE_CLT} provides a formal sense in which this embedding resembles data from a two-component Gaussian mixture distribution. Figure~\ref{fig:SBM_MMSBM_superplot}b) shows how this model fits the embedding, based on the approximate maximum likelihood parameters found by the mclust algorithm \citep{fraley1999mclust}. The estimated component assignment of each point is indicated by colouring, the empirical component centres are shown as small circles and corresponding empirical 95\% Gaussian contours in dashed lines.

Following the construction of Section~\ref{sec:spesh_sbm}, the graph is a GRDPG with signature $(1,1)$, and its latent positions $X_i \in \R^2$ are i.i.d. from a distribution $F$ which places all its mass on two distinct points, $\mathrm{v}_1$ and $\mathrm{v}_2$, with respective probabilities $0.2$ and $0.8$. The implication of Theorem~\ref{thrm:GRDPG_ASE_CLT} for this example is that, conditional on $Z_i$, the transformed embedding $\bQ \hat X_i$ is approximately distributed as an independent Gaussian vector with centre $\mathrm{v}_{Z_i}$ and covariance $n^{-1/2}\bSigma(\mathrm{v}_{Z_i})$, where $\bQ \in \defO(1,1)$; or, together, the vectors $\bQ \hat X_1, \ldots, \bQ \hat X_n$ approximately follow a two-component Gaussian mixture model.

As a result, the points $\hat X_1, \ldots, \hat X_n$ resemble data from a two-component Gaussian mixture model which have been put through a random, data-dependent, linear transformation $\bQ^{-1}$. Given $\bA$, the cluster assignments, $\rv_1$ and $\rv_2$, we can compute $\bQ$ (see Section~\ref{sec:results}). For the graph that we simulated,
\begin{equation*}
\bQ\approx \left[\begin{array}{cc} 1.05 & -0.32 \\ 0.32 & -1.05\end{array}\right].
\end{equation*}
Figure~\ref{fig:SBM_MMSBM_superplot}c) shows the embedding, the transformed centres $\bQ^{-1}\mathrm{v}_1$, $\bQ^{-1}\mathrm{v}_2$ (crosses), and correspondingly transformed 95\% Gaussian contours (solid ellipses) predicted by Theorem~\ref{thrm:GRDPG_ASE_CLT} for comparison with the empirical versions (circles and dashed ellipses) obtained from the Gaussian mixture model fit.

In practice, we typically observe only $\bA$ and do not have access to $\bQ$, and indeed Algorithm~\ref{alg:spec} is suggesting we fit a Gaussian mixture model to  $\hat X_1, \ldots, \hat X_n$, not  $\bQ \hat X_1, \ldots, \bQ \hat X_n$. Remarkably, by doing the former, we are effectively accomplishing the latter.

This is because, under a Gaussian mixture model, the value of the likelihood is unchanged if, while the component weights are held fixed, the data, component means and covariances are respectively transformed as $X \rightarrow \bM X$, $\mu \rightarrow \bM \mu$, $\Gamma \rightarrow \bM \Gamma \bM^\top$, where $\bM$ is any indefinite orthogonal matrix, for the simple reason that $|\det(\bM)| = 1$. For the maximum likelihood parameters obtained from $\hat X_1, \ldots, \hat X_n$, the maximum \emph{a posteriori} probability assignment of the data indices to mixture components, $\hat Z_1, \ldots, \hat Z_n$, is identical to that which would have been obtained from  $\bQ \hat X_1, \ldots, \bQ \hat X_n$. The cluster centres which would have been obtained from  $\bQ \hat X_1, \ldots, \bQ \hat X_n$ are just $\bQ \hat \rv_1$ and $\bQ \hat \rv_2$, where $\hat \rv_1, \hat \rv_2$ are those actually obtained from $\hat X_1, \ldots, \hat X_n$. The pairs provide identical estimates of $\bB^{(1)}$ through $\hat \bB^{(1)}_{kl} = (\bQ \hat \rv_k)^{\top} \bI_{1,1}(\bQ \hat \rv_l) = \hat \rv_k^{\top} \bI_{1,1} \hat \rv_l$ for $k,l \in \{1,2\}$. In practice regularisation parameters in the clustering method may yield results that are not invariant to indefinite transformation, but such effects should be small for large $n$, especially taken alongside the additional result, in Lemma~\ref{lem:indefiniteSpectralBoundGRDPG}, that the spectral norm of $\bQ$ is almost surely bounded.

As can be seen in Figure~\ref{fig:SBM_MMSBM_superplot}c), the clusters are elliptical rather than spherical, in theory and in practice. For this reason, a clustering algorithm which favours spherical solutions might produce inaccurate results. This is one issue with $K$-means clustering, often said to be implicitly fitting a Gaussian mixture model with equal-volume spherical components \citep{fraley2002model}, and indeed numerical studies \citep{athreya2016limit,tang2016limit} show it has higher misclassification rate on the task of community separation.
Another issue, perhaps more a limitation of our own framework, is that Euclidean distance is not identifiable under the GRDPG, and so the empirical distances $\lVert \hat X_i - \hat X_j \rVert$ are not easily understood through our approach. While our results say enough to prove $K$-means clustering is consistent (a fact already established by other methods), the algorithm is not invariant: the clusterings obtained from  $\hat X_1, \ldots, \hat X_n$ and from $\bQ \hat X_1, \ldots, \bQ \hat X_n$ are different (although they agree asymptotically).

\subsection{A three-community mixed membership stochastic block model} \label{sec:mmsbm_illustration}
In this section, we show how uniform consistency (Theorem~\ref{thrm:GRDPG_ASE_ttinf}) guarantees the consistency of simplex-fitting for mixed membership stochastic block model estimation. To illustrate ideas, we consider a three-community mixed membership stochastic block model under which every node is first independently assigned a 3-dimensional probability vector $\pi_i \sim \text{Dirichlet}(1, 0.5, 0.5)$, for $i = 1, \ldots, n$, and the block matrix is

\begin{equation*}
\bB^{(2)} = \left(\begin{array}{ccc}0.6 & 0.9 & 0.9 \\ 0.9 & 0.6 & 0.9 \\ 0.9 & 0.9 & 0.3 \end{array}\right),
\end{equation*}
which has one positive and two negative eigenvalues. The three-dimensional adjacency spectral embedding of a simulated graph on $n=5000$ nodes is shown Figure~\ref{fig:SBM_MMSBM_superplot}d); this point cloud resembles a `noisy simplex'. Figure~\ref{fig:SBM_MMSBM_superplot}e) shows the minimum volume 2-simplex (dashed lines) enclosing the two principal components of the points. As we will see, Theorem~\ref{thrm:GRDPG_ASE_ttinf} provides a formal sense in which the point cloud becomes denser \emph{and} sharper as $n$ grows, so that this simplex converges. 

Following the construction of Section~\ref{sec:spesh_mmsbm}, the graph is a GRDPG with signature $(1,2)$ and its latent positions $X_i \in \R^3$ are i.i.d. from a distribution $F$ supported on the simplex with vertices $\rv_1, \rv_2, \rv_3 \in \R^3$.

Since $\log^c n/n^{1/2} \rightarrow 0$, Theorem~\ref{thrm:GRDPG_ASE_ttinf} states that the \emph{maximum error} between any $\bQ \hat X_i$ and $X_i$ vanishes, across $i \in \{1, \ldots, n\}$, and this has wide-reaching implications for geometric analysis since, as sets, $\bQ \hat X_1, \ldots, \bQ \hat X_n$ and $X_1, \ldots, X_n$ are asymptotically equal in Hausdorff distance. In particular, for our example, where $\bQ \in \defO(1,2)$, the point set $\bQ \hat X_1, \ldots, \bQ \hat X_n$, like $X_1, \ldots, X_n$, converges in Hausdorff distance to the simplex with vertices  $\rv_1, \rv_2, \rv_3$. This simplex would be consistently estimated by the minimum volume simplex enclosing $\bQ \hat X_1, \ldots, \bQ \hat X_n$,  projected onto their $2$-dimensional principal hyperplane, by the argument presented in \citet{rubin2017consistency}.

We must just verify that the \emph{actual} procedure, i.e., applied to $\hat X_1, \ldots, \hat X_n$, is consistent despite indefinite orthogonal transformation.  The bounded spectral norm of $\bQ^{-1}$ (Lemma~\ref{lem:indefiniteSpectralBoundGRDPG} and discussion) guarantees that
\[\underset{i \in \{1, \ldots, n\}}{\textnormal{max}}\|\hat X_i - \mathbf{Q}^{-1}X_i\| \leq \lVert \bQ^{-1} \rVert \underset{i \in \{1, \ldots, n\}}{\textnormal{max}}\|\bQ \hat X_i - X_i\| = \probO\left(\tfrac{\log^c n}{n^{1/2}}\right),\]
through which one can replace $X_i$ by $\mathbf{Q}^{-1}X_i$ in the argument of \citet{rubin2017consistency}. The minimum volume simplex enclosing $\hat Y_1, \ldots, \hat Y_n$ (the points $\hat X_1, \ldots, \hat X_n$ projected onto their 2-dimensional principal hyperplane, see Algorithm~\ref{alg:mmspec}), is then found to converge to the simplex with vertices $\bQ^{-1} \rv_1, \bQ^{-1} \rv_2, \bQ^{-1} \rv_3$. This simplex is shown in Figure~\ref{fig:SBM_MMSBM_superplot}f) (solid lines). 

As with the stochastic block model, the presence of indefinite orthogonal transformation in the estimated vertices $\hat \rv_1, \hat \rv_2, \hat \rv_3$ is immaterial for estimating $\bB$ through $\hat \bB^{(2)}_{kl} = \hat \rv_k^{\top} \bI_{1,2}\hat \rv_l$ for $k,l \in \{1,2,3\}$. Moreover, the community membership probability vectors $\hat \pi_1, \ldots, \hat \pi_n$, obtained as the barycentric coordinates of $\hat Y_1, \ldots, \hat Y_n$ with respect to  $\hat \rv_1, \hat \rv_2, \hat \rv_3$, are the same as those which would be obtained from $\bQ \hat Y_1, \ldots, \bQ \hat Y_n$ with respect to $\bQ \hat \rv_1, \bQ \hat \rv_2, \bQ \hat \rv_3$.

\section{Real data}\label{sec:real-data}
\subsection{A collection of real-world graphs} \label{sec:graph_survey}
Allowing a principled treatment of negative eigenvalues increases the scope of application of spectral embedding. To gain an impression of the significance of our extension, we conduct a survey of graphs from a variety of application domains. Graphs with about 5,000 nodes were chosen from each of the domain categories of a comprehensive online network repository (\url{networkrepository.com}), selecting the largest if all graphs in a category were smaller, and rejecting the category if all graphs were much larger.

For each of the resulting 24 graphs, an estimated embedding dimension $\hat d$ was obtained using profile likelihood \citep{zhu2006automatic}, and $\hat p$ (respectively, $\hat q$) were estimated as the number of positive (respectively, negative) eigenvalues among the largest $\hat d$ in magnitude. Results are shown in Table~\ref{tab:survey}, and it happens that precisely half of the graphs have $\hat q > 0$. Moreover, the smallest negative eigenvalue often ranks among the largest in magnitude. 

\begin{table} \begin{center}
\begin{tabular}{lrrrrr} \toprule  
  {Graph category} & {nodes} & {edges} & {$\hat{d}$} & {$\hat{p}$} & {$\hat{q}$}\\ \midrule
  animal social & 1,686 & 5,324 & 9 & 9 & 0\\ 
benchmark (BHOSLIB) & 4,000 & 7,425,226 & 2 & 1 & 1\\ 
benchmark (DIMACS 10) & 4,096 & 12,264 & 6 & 6 & 0\\ 
benchmark (DIMACS) & 4,000 & 4,000,268 & 3 & 2 & 1\\ 
biological & 4,413 & 108,818 & 23 & 11 & 12\\ 
brain & 1,781 & 33,641 & 6 & 6 & 0\\ 
cheminformatics & 125 & 282 & 12 & 12 & 0\\ 
collaboration & 4,158 & 13,422 & 1 & 1 & 0\\ 
communication & 1,899 & 61,734 & 54 & 27 & 27\\ 
ecology & 128 & 2,106 & 47 & 47 & 0\\ 
economic & 4,008 & 8,188 & 4 & 2 & 2\\ 
email & 1,133 & 5,451 & 28 & 25 & 3\\ 
infrastructure & 4,941 & 6,594 & 2 & 1 & 1\\ 
interaction & 1,266 & 6,451 & 1 & 1 & 0\\ 
molecular & 5,110 & 10,532 & 16 & 16 & 0\\ 
power & 5,300 & 13,571 & 22 & 22 & 0\\ 
proximity & 410 & 2,765 & 17 & 17 & 0\\ 
retweet & 5,248 & 6,394 & 25 & 20 & 5\\ 
road & 2,642 & 3,303 & 11 & 9 & 2\\ 
router & 2,113 & 6,632 & 13 & 13 & 0\\ 
social (advogato) & 6,551 & 51,332 & 46 & 46 & 0\\ 
social (facebook) & 5,372 & 279,191 & 6 & 5 & 1\\ 
structural mechanics & 5,489 & 143,300 & 12 & 6 & 6\\ 
web & 4,767 & 37,375 & 12 & 10 & 2\\ 
 \bottomrule
\end{tabular}
\end{center}
\caption{A collection of real-world graphs. $\hat{d}$: the estimated dimension, or approximate rank of the adjacency matrix, computed using profile likelihood \citep{zhu2006automatic}; $\hat p$ (respectively, $\hat{q}$): the number of positive (respectively, negative) eigenvalues of the adjacency matrix among the first $\hat d$. The estimate $\hat{q}$ is non-zero for half of these graphs; these negative components can hold important signal, that our generalisation of the random dot product graph allows us to model.} %
\label{tab:survey}
\end{table}

\subsection{Detailed example: link prediction on a computer network}
\label{sec:cyber-data}Cyber-security applications often involve data with a network structure, for example, data relating to computer network traffic \citep{neil2013scan}, the underground economy \citep{li2014identifying}, and the internet-of-things \citep{hp-report15}. In the first example, a concrete reason to seek to develop an accurate network model is to help identify intrusions on the basis of anomalous links \citep{neil2013towards,nah16}.

Figure~\ref{fig:NetFlow_graph_1-5} shows, side by side, graphs of the communications made between computers on the Los Alamos National Laboratory network \citep{kent16}, over a single minute on the left, and five minutes on the right. The graphs were extracted from the ``network flow events'' dataset, by mapping each IP address to a node, and recording an edge if the corresponding two IP addresses are observed to communicate at least once over the specified period.

Neither graph contains a single triangle, i.e., three nodes all connecting to each other. This is a symptom of a broader property, known as heterophily or disassortivity \citep{khor2010concurrency}, that similar nodes are relatively \emph{unlikely} to connect. In computer networks, such behaviour might be expected for a number of reasons, including the common server/client networking model and the physical location of routers (where collection happens) \citep{prd16}. The random dot product graph is unsuited to modelling heterophilic connectivity patterns. For example, any two-community stochastic block model with lower on- than off- diagonal elements is out of scope. The eigenvalues of the adjacency matrix of the 5-minute graph are plotted in Figure~\ref{fig:example_spectrum}, showing an abundance of negative eigenvalues which, again, cannot be modelled by the random dot product graph (apart from as noise).

\begin{figure}
\centering
\includegraphics[width=7cm]{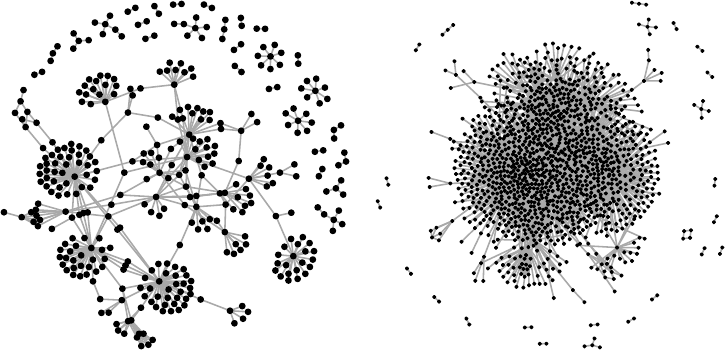}
\caption{Los Alamos National Laboratory computer network. Graphs of the connections made between different computers (IP addresses) over the first minute (left) and first five minutes (right) of the ``network flow events'' dataset \citep{kent16}. Neither graph contains a single triangle, a motif which would be expected in abundance under homophily (`a friend of my friend is my friend'), suggesting the need to relax this modelling assumption.} \label{fig:NetFlow_graph_1-5}
\end{figure}

\begin{figure}
\centering
\includegraphics[width=7cm]{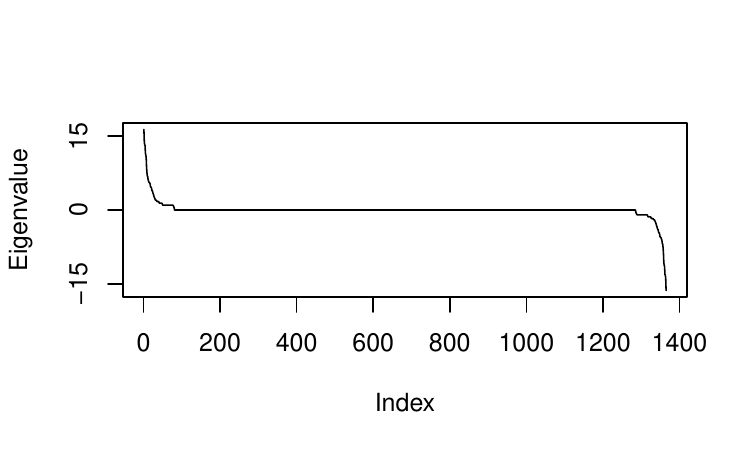}
\caption{Eigenvalues of the adjacency matrix of the five-minute connection graph of computers on the Los Alamos National Laboratory network, showing roughly equal contribution, in magnitude, from the positive and negative eigenvalues.} \label{fig:example_spectrum}
\end{figure}

The modelling improvement offered by the GRDPG over the random dot product graph is now demonstrated empirically, through out-of-sample link prediction. For the observed 5-minute graph, we estimate the GRDPG latent positions via adjacency spectral embedding, as in Definition~\ref{def:ASE_and_LSE}, and the random dot product graph latent positions using an analogous procedure that retains instead only the largest eigenvalues and corresponding eigenvectors. In both cases, we choose $d=10$ (admittedly arbitrarily) as the embedding dimension.

Whereas we have found that certain analysis techniques (e.g. fitting a Gaussian mixture model) are automatically adapted to the GRDPG signature, a concrete estimate of the signature is required here, and $p,q$ are respectively estimated to be the number of positive and negative eigenvalues among the top $d$, in magnitude. 

To compare the models, we then attempt to predict which \emph{new} edges will occur in the next five-minute window, a task known as link prediction, disregarding those involving new nodes. Figure~\ref{fig:BS-OS_ROCs} shows the receiver operating characteristic (ROC) curves for each model, treating the prediction task as a classification problem where the presence or absence of an edge is encoded as an instance of the positive or negative class, respectively, and predicted by thresholding the inner product or indefinite inner product of the relevant pair of estimated latent positions. By presenting estimated classification performance (true positive versus false positive rate) at all possible thresholds (which give different points along the curve), the ROC allows a direct comparison that is independent of the potentially different ranges and scales of the two inner products. For this prediction problem, the GRDPG model is far superior.

\begin{figure}
\centering
\includegraphics[width=7cm]{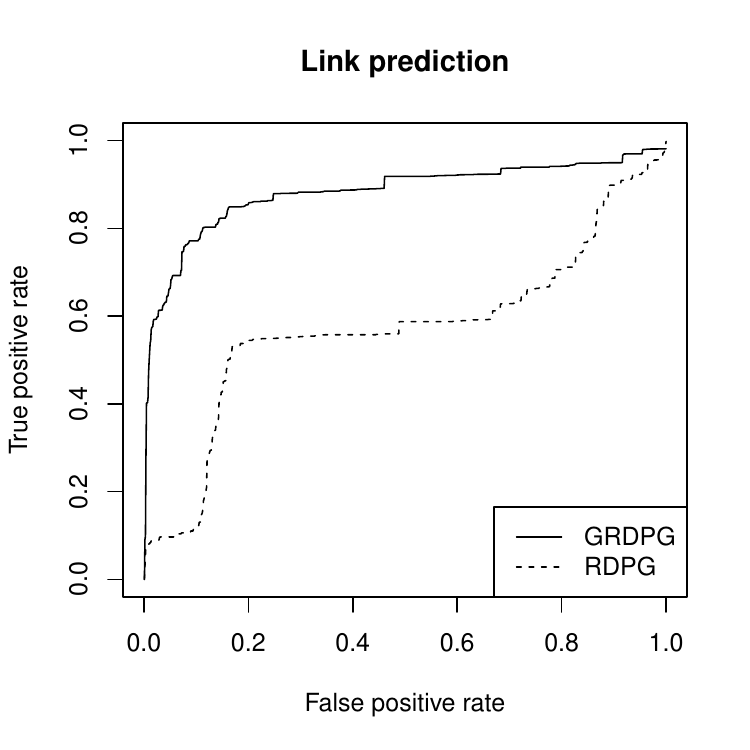}
\caption{Receiver Operating Characteristic curves comparing the random dot product graph (RDPG) and GRDPG at the task of link prediction on the Los Alamos National Laboratory computer network. The nodes are embedded into $\R^{10}$ based on their graph of connections over the first 5-minute window, using either the largest positive (RDPG) or largest magnitude (GRDPG) eigenvalues. The performance of these embeddings is then evaluated for predicting new edges over the next 5-minute window, using the matrix of pairwise inner products (RDPG) or pairwise indefinite inner products as estimated edge probabilities. The ROC curves for each embedding is presented, with the presence or absence of an edge encoded as an instance of the positive or negative class, respectively, and thresholding the estimated edge probabilities to obtain different points along the curve.}\label{fig:BS-OS_ROCs}
\end{figure}

What does the GRDPG model add over the mixed membership or standard stochastic block models? For large real-world networks, the latter models are often too simplistic, whereas the GRDPG model and the statistical investigation thereof, as presented in this paper, provide a more broadly applicable, principled starting point for analyses when low-dimensional latent generative structure is supposed. To illustrate this, we construct the full graph of connections between computers on the Los Alamos National Laboratory network, comprising roughly 12 thousand nodes and one hundred thousand edges. As before, the nodes are spectrally embedded into $\R^{10}$, but these are now visualised in two ways. First, we fit a Gaussian mixture model with ten components, as is consistent with a stochastic block model assumption with $K = d = 10$. Each of the ten panels in Figure~\ref{fig:netflow_clusters} shows the two principal components of one of the inferred clusters in a faithful aspect ratio. Because every communication has an associated port number indicating the type of service being used, for example port 80 corresponds to web activity and port 25 to email, this information can be used to colour the nodes according to their most commonly employed port. The embedding, obtained using only connectivity data, is clearly highly associated with port activity, so that the geometry that can be distinguished appears to be somehow predictive of nodes' behaviour. At the same time the clusters, for the most part, do not appear to follow a Gaussian or finite mixture of Gaussian distributions, as predicted under a stochastic block model. A different view of the data is obtained using t-distributed stochastic neighbour embedding \citep{maaten2008visualizing} in Figure~\ref{fig:tsne}, again showing high association with port activity. Taken together, these views of the data reveal complex structure in low-dimensional pseudo-Euclidean latent space, for which the GRDPG provides a preferrable starting point for statistical analysis to the random dot product graph, the mixed membership, or standard stochastic block model.  

\begin{figure}
\centering
\includegraphics[width=\textwidth]{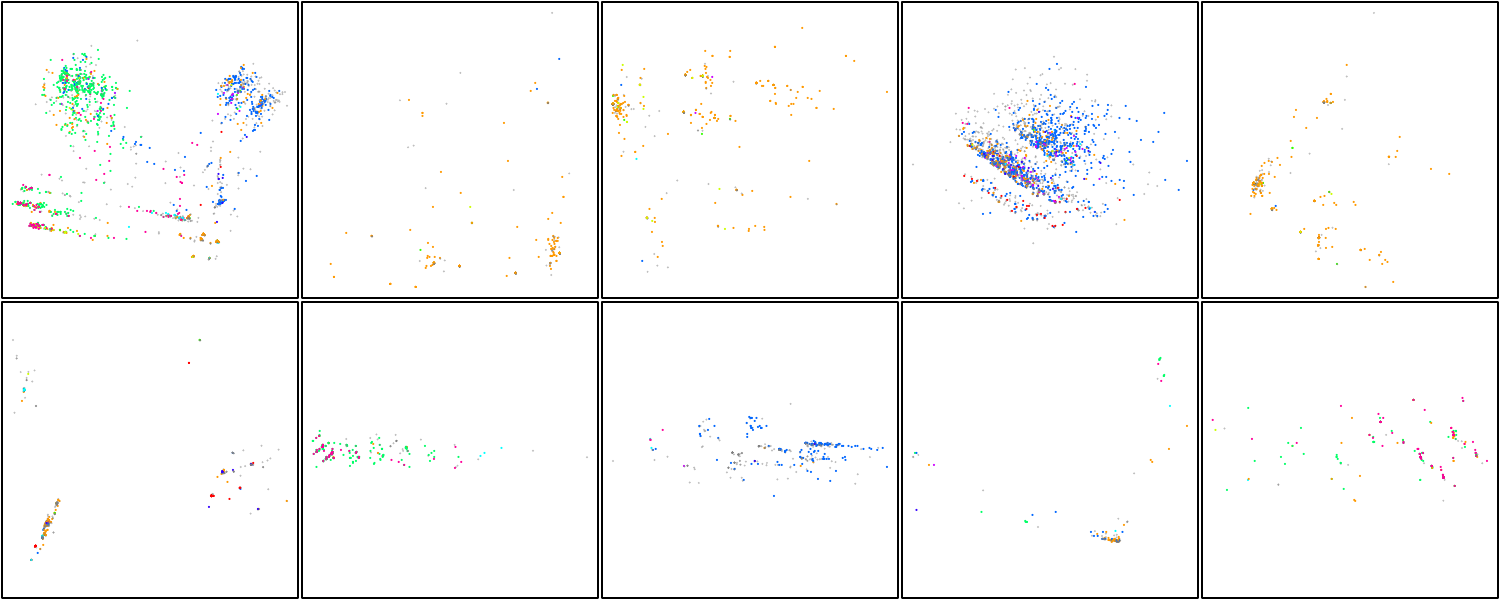}
\caption{Visualisation of the full graph of connections between computers on the Los Alamos National Laboratory network, using adjacency spectral embedding into $\R^{10}$, followed by fitting a Gaussian mixture model with $10$ components. Each of ten clusters obtained is shown in a panel (two principal components) and the colour of each point represents the corresponding node's most commonly employed port (loosely representing the connection purpose, e.g. web, email), showing association with the structure observed in the embedding. The structure is richer than the standard or mixed membership stochastic block models would predict.} \label{fig:netflow_clusters}
\end{figure}

\begin{figure}
\centering
\includegraphics[width=9cm]{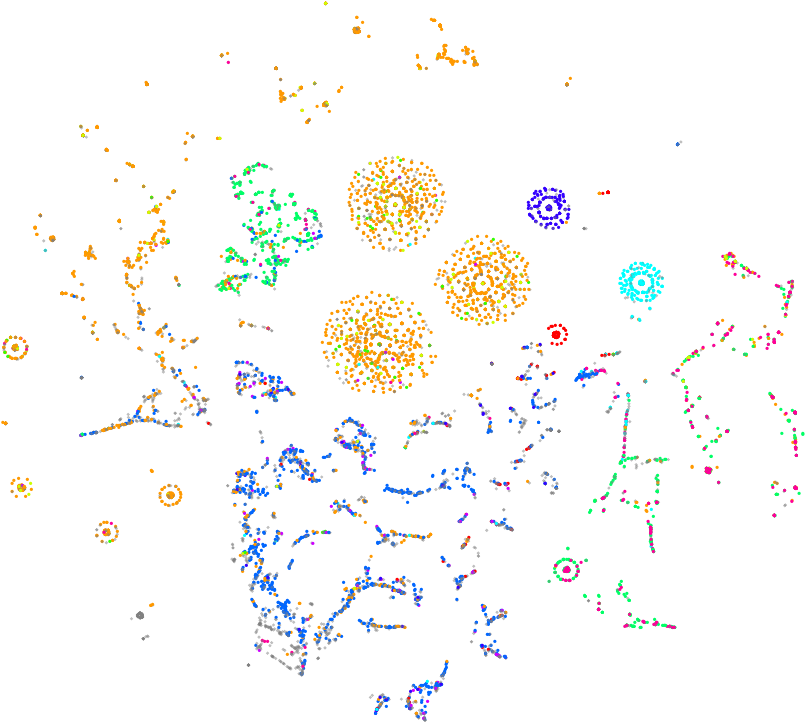}
\caption{Alternative visualisation the full graph of connections between computers on the Los Alamos National Laboratory network, using adjacency spectral embedding into $\R^{10}$ followed by t-distributed stochastic neighbour embedding. The colour of each point represents the corresponding node's most commonly employed port (loosely representing the connection purpose, e.g. web, email), showing association with the structure observed in the embedding.} \label{fig:tsne}
\end{figure}

\section{Conclusion}\label{sec:conclusion}
This paper presents the \emph{generalised random dot product graph}, a latent position model which includes the stochastic block model, its extensions, and the random dot product graph as special cases. The key feature that is added by the generalisation is the possibility of modelling non-homophilic connectivity behaviour, e.g., where `opposites attract'. 

This model provides an appropriate statistical framework for interpreting spectral embedding. This is substantiated in several theoretical results that together show that the vector representations of nodes obtained by spectral embedding provide uniformly consistent latent position estimates with asymptotically Gaussian error. A byproduct of this theory is to add insight and methodological improvements to the estimation of community structure in networks, and practical applications are demonstrated in a cyber-security example.

\bibliographystyle{apalike}
\bibliography{graph_embedding}

\newpage
\appendix
\section{Indefinite orthogonal transformations in practice}
In the next two sections, we respond to the possible argument that concerns about distortion by an indefinite orthogonal $\bQ$ arise only as an artifact of the GRDPG formalism; or, at least, that such concerns are of little relevance if sole interest in spectral embedding is to allow inference for stochastic block models.
\subsection{Distortion under the stochastic block model} \label{sec:distortion1}%

A practical experiment that reveals the presence of indefinite orthogonal transformation implicit in spectral embedding is to simulate graphs from two stochastic block models that differ only in their community proportions. As in Section \ref{sec:sbm_illustration} we will use $K=2$ and the probability matrix $\bB^{(1)}$, with community proportions now set to $(0.5, 0.5)$ and $(0.05, 0.95)$ respectively, as opposed to $(0.2, 0.8)$. Resulting spectral embeddings on $n = 4000$ nodes are shown in the left-hand panel of Figure~\ref{fig:unidentifiability_in_practice}, in orange and purple respectively. While each exhibits two clusters, there is very little overlap between the orange and purple point clouds. This discrepancy is almost entirely due to distortion by indefinite orthogonal transformation. Since in simulation $\bQ$ can be identified, by inversion the indefinite orthogonal transformation that takes us from the purple to the orange point cloud can be found, and those conforming point clouds are shown in the right-hand panel. The centres of corresponding clusters are now in much closer agreement, and remaining discrepancy is down to statistical error. One cannot make sense of the geometric relationship between the two point clouds without the notion of an indefinite orthogonal transformation.

If an indefinite orthogonal transformation is applied to $\hat \bX$ before clustering using $K$-means (with Euclidean distance), a different partition of the points is obtained. One might have hoped that the spectral decomposition would produce an embedding somehow optimally configured for clustering using Euclidean $K$-means, but there is no obvious statistical argument to prefer $\hat \bX$. For example, within-class variance was previously used to compare spectral embeddings under the stochastic block model \citep{sarkar2015role}, but both the empirical (cluster assignment estimated) and oracle (cluster assignment known) within-class variances are minimised for a different, non-degenerate configuration. There are no such concerns with Gaussian mixture modelling, which is invariant. The arguments here support, but go beyond, previous analyses showing the suboptimality of $K$-means clustering versus Gaussian mixture modelling under the non-negative-definite stochastic block model \citep{tang2016limit}, since in that special case $K$-means clustering is at least invariant.

\begin{figure}
  \centering
  \includegraphics[width=\textwidth]{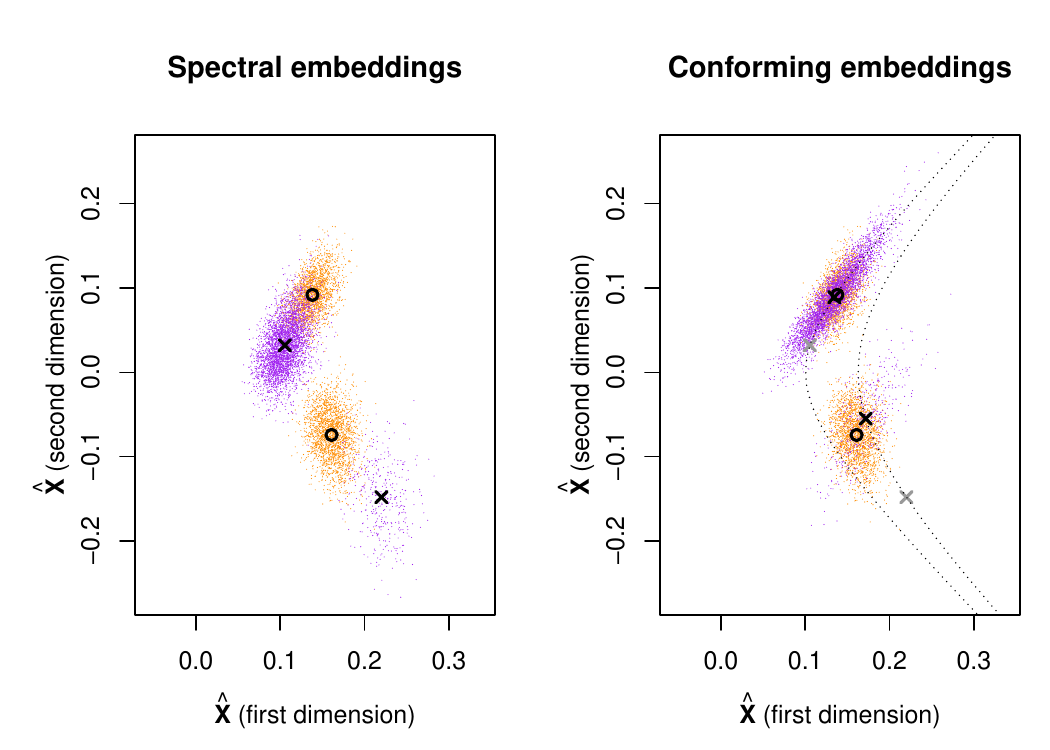}
  \caption{A practical manifestation of $\bQ$. Two graphs are generated from a two-community stochastic block model with block matrix $\bB^{(1)}$, $n=4000$ nodes, and respective community proportions $(0.5, 0.5)$ (orange) and $(0.05, 0.95)$ (purple). Left: adjacency spectral embedding into $\R^2$, with circles (respectively crosses) indicating the cluster centres of the orange (respectively purple) point cloud. Right: the purple point cloud is re-configured to align with the orange point cloud, by reverting the indefinite orthogonal transformation associated with the former, and then applying that corresponding to the latter. The dotted lines show the orbits along which the cluster centres were moved, and the grey crosses their original position. The cluster centres of both point clouds (black circles and crosses) are now close.}
  \label{fig:unidentifiability_in_practice}
\end{figure}

\subsection{Distortion under the degree-corrected stochastic block model} \label{sec:distortion2}%
\begin{figure}[htp]
  \centering
  \includegraphics[width=\textwidth]{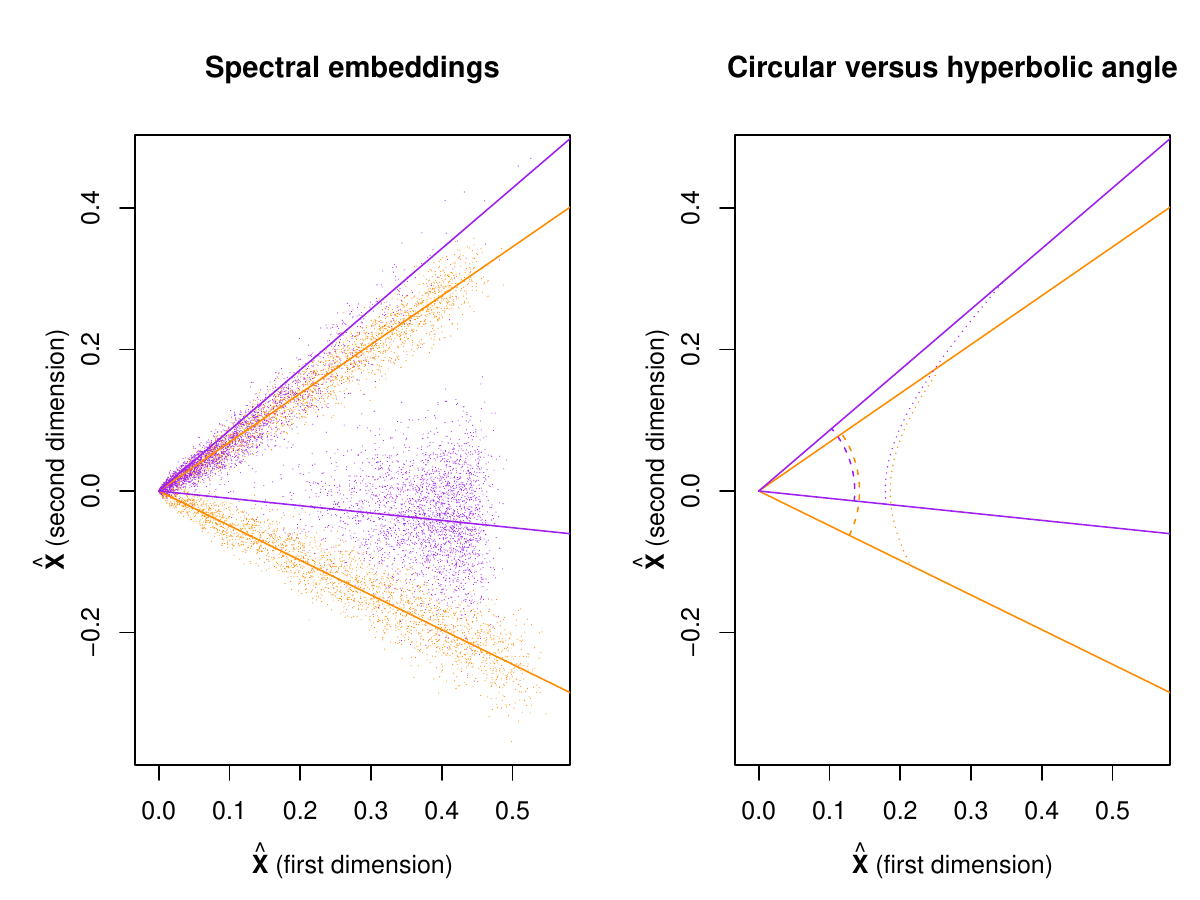}
  \caption{Distortion of angle under the degree-corrected stochastic block model. Two graphs are generated from a two-community degree-corrected stochastic block model with probability matrix $\bB^{(1)}$, $n=5000$ nodes, community proportions $(0.5, 0.5)$, and uniform (orange) versus differing Beta-distributed (purple) weights. Left: adjacency spectral embedding into $\R^2$. In each case the point cloud is a noisy observation of two rays through the origin, shown with orange and purple lines respectively. Right: Euclidean (dashed) and hyperbolic (dotted) angles between the rays. The two hyperbolic angles are identical but the two Euclidean angles differ.}
  \label{fig:c_v_h_angle}
\end{figure}

Two graphs on $n=5000$ nodes are now simulated from a two-community degree-corrected stochastic block model, with block matrix $\bB^{(1)}$, and community proportions $(0.5, 0.5)$, changing only the degree distributions. In the first case we set $w_i \overset{i.i.d}{\sim} \text{uniform}[0,1]$, whereas in the second $w_i \overset{ind}{\sim} \text{Beta}(1,5)$ when $Z_i = 1$ and $w_i \overset{ind}{\sim} \text{Beta}(5,1)$ when $Z_i = 2$. Resulting spectral embeddings into $\R^2$ are shown in the left-hand panel of Figure~\ref{fig:c_v_h_angle}, in orange and purple respectively. Because $\bB^{(1)}$ has one positive and one negative eigenvalue, our theory predicts that each point cloud should live close to the union of two rays through the origin whose joint configuration is predicted \emph{only} up to indefinite orthogonal transformation in $\defO(1,1)$. In particular the hyperbolic angle between the two rays
\[\acosh(\rv_1^\top \bI_{1,1} \rv_2),\]
is a population quantity that subject to regularity conditions (e.g. as given in Theorem~\ref{thrm:GRDPG_ASE_ttinf}) can be estimated consistently and is not dependent on the node weights. This would make a natural measure of distance between the communities and, in general, when $p = 1$ and $ q \geq 1$ the point cloud $\hat X_i/(\hat X^{\top}_i \bI_{1,q} \hat X_i)$ equipped with distance $d(x,y) = \acosh(x^\top \bI_{1,q} y)$ is an embedding into hyperbolic space that accounts for degree heterogeneity. Accordingly, the hyperbolic angle between the two orange rays and that between the two purple rays are equal in the right panel of Figure~\ref{fig:c_v_h_angle}. On the other hand the Euclidean (or ordinary) angle $\acos(u_1^\top u_2)$, where $u_1$, $u_2$ are unit-norm vectors on each ray, is visibly different between the two point clouds.

\section{Uniqueness}
There are a number of reasonable alternative latent position models which, broadly described, assign the nodes to elements $X_1, \ldots, X_n$ of a set $\mathcal{X}$ and, with this assignment held fixed, set
\[A_{ij}\overset{ind}{\sim} \text{Bernoulli}\left\{f(X_i, X_j)\right\},\]
for $i < j$, where $f : \mathcal{X}^2 \rightarrow [0,1]$ is some symmetric function. For example, \citet{hoff2002latent} considered the choice $f(x,y) = \text{logistic}\left(\alpha-\lVert x - y\rVert\right)$. What is special about the GRDPG?

One argument for considering the GRDPG is that it provides essentially the only way of faithfully reproducing mixtures of connectivity probability profiles as convex combinations in latent space. This idea is now made formal.

\begin{property}[Reproducing mixtures of connectivity probability profiles] \label{prop:propertyp}
Suppose that $\mathcal{X}$ is a convex subset of a real vector space, and that $S$ is a subset of $\mathcal{X}$ whose convex hull is $\mathcal{X}$. We say that a symmetric function  $f : \mathcal{X}^2 \rightarrow [0,1]$ reproduces mixtures of connectivity probability profiles from $S$ if, whenever $x = \sum_r \alpha_r u_r$, where $u_r \in S$, $0 \leq \alpha_r\leq 1$ and  $\sum \alpha_r = 1$, we have
 \[f(x,y) = \sum_{r} \alpha_r f(u_r,y),\]
for any $y$ in $\mathcal{X}$.
\end{property}
This property helps interpretation of latent space. For example, suppose $X_1, \ldots, X_4 \in S$, and $X_1 = 1/2 X_2 + 1/2 X_3$. In a latent position model where $f$ satisfies the above, we can either think of $\bA_{14}$ as being directly generated through $\bA_{14}\overset{ind}{\sim} \text{Bernoulli}\left\{f(X_1, X_4)\right\}$, or by first flipping a coin, and generating an edge with probability $f(X_2, X_4)$ if it comes up heads, or with probability $f(X_3, X_4)$ otherwise. 

In choosing a latent position model to represent the mixed membership stochastic block model, it would be natural to restrict attention to kernels satisfying Property~\ref{prop:propertyp}, since they allow the simplex representation illustrated in Figure~\ref{fig:mmsbm_illustration}, with vertices $S = \{\rv_1, \ldots, \rv_K\}$ representing communities, and latent positions within it reflecting the nodes' community membership preferences.

We now find that in finite dimension, any such choice amounts to a GRDPG model in at most one extra dimension: 
\begin{theorem}\label{thm:representation}
Suppose $\mathcal{X}$ is a subset of $\R^{l}$, for some $l \in \N$. The function $f$ reproduces mixtures of connectivity probability profiles if and only if there exist integers $p \geq 1$, $q \geq 0$, $d=p+q\leq l+1$, a matrix $\bT \in \R^{d \times l}$, and a vector $\nu \in \R^{d}$ so that $f(x,y) = (\bT x + \nu)^\top \indefI (\bT y + \nu)$, for all $x,y \in \mathcal{X}$.
\end{theorem}
The mixed membership stochastic block model is an example where this additional dimension is required: in Figure~\ref{fig:mmsbm_illustration} the model is represented as a GRDPG model in $d=3$ dimensions, but the latent positions live on a $2$-dimensional subset. 

\subsection{Proof of Theorem \ref{thm:representation}}
Let $\text{aff}(C)$ denote the \emph{affine hull} of a set $C \subseteq \R^d$,
\[\text{aff}(C) = \left\{\sum_{i=1}^n \alpha_i u_i; n \in \N, u_i \in C, \alpha_i \in \R, \sum_{i=1}^n \alpha_i = 1 \right\}.\]
We say that a function $g: \R^d \times \R^d \rightarrow \R$ is a bi-affine form if it is an affine function when either argument is fixed, i.e., $g\{\lambda x_1 + (1-\lambda) x_2, y\} = \lambda g(x_1,y) + (1-\lambda) g(x_2,y)$ and $g\{x, \lambda y_1 + (1-\lambda) y_2\} = \lambda g(x,y_1) + (1-\lambda) g(x,y_2)$, for any $x,y, x_1, x_2, y_1, y_2 \in \R^d$,  $\lambda \in \R$. We say that a function $h: \R^d \times \R^d \rightarrow \R$ is a bilinear form if it is bi-affine and $h(x,y) = 0$ if either argument is zero.

The proof of Theorem~\ref{thm:representation} is a direct consequence of the following two lemmas. 
\begin{lemma}\label{lem:bi-affine}
Suppose $\mathcal{X}$ is a convex subset of $\R^{l}$, for some $l \in \N$. Then $f$ reproduces mixtures of connectivity probability profiles on $S$ if and only if it can be extended to a symmetric bi-affine form $g : \text{aff}(\mathcal{X}) \times \text{aff}(\mathcal{X}) \rightarrow \R$.
\end{lemma}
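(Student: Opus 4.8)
The plan is to prove the two implications separately, with the reverse implication being routine and the forward one carrying all the content. For the ``if'' direction, suppose $f$ is the restriction to $\mathcal{X}^2$ of a symmetric bi-affine form $g$ on $\aff(\mathcal{X})\times\aff(\mathcal{X})$. Since $g$ is affine in its first argument it respects every affine, hence in particular every convex, combination, so $g(\sum_r \alpha_r u_r, y) = \sum_r \alpha_r g(u_r,y)$ whenever $\sum_r \alpha_r = 1$. Restricting to $u_r \in S \subseteq \mathcal{X}$ and $y \in \mathcal{X}$ recovers exactly Property~\ref{prop:propertyp}, so $f$ reproduces mixtures of connectivity behaviour from $S$. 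Nothing more is needed here.

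For the ``only if'' direction I would proceed in three steps. First, I would upgrade the reproducing property from $S$ to all of $\mathcal{X}$: given a convex combination $x = \sum_r \alpha_r v_r$ with $v_r \in \mathcal{X}$, I would write each $v_r = \sum_s \beta_{rs} u_{rs}$ as a convex combination of points of $S$, so that $x = \sum_{r,s}\alpha_r \beta_{rs} u_{rs}$ is itself a convex combination of points of $S$; applying the hypothesis twice then yields $f(x,y) = \sum_r \alpha_r f(v_r, y)$ for every $y \in \mathcal{X}$. Thus $x \mapsto f(x,y)$ respects convex combinations on $\mathcal{X}$, and by the symmetry of $f$ the same holds in the second argument.

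Second, and this is where the main difficulty lies, I would upgrade ``affine with respect to convex combinations'' to ``restriction of a genuinely affine function on $\aff(\mathcal{X})$'', which means making sense of affine combinations with possibly negative coefficients. Fixing $y$ and a base point $x_0$ in the relative interior of $\mathcal{X}$, I would show that the increment $v \mapsto f(x_0 + v, y) - f(x_0,y)$ is additive and homogeneous on a neighbourhood of the origin in the direction space of $\aff(\mathcal{X})$: homogeneity follows from $x_0 + \lambda v = \lambda(x_0+v) + (1-\lambda)x_0$ together with the reflection identity at the midpoint $x_0 = \tfrac12(x_0+v)+\tfrac12(x_0-v)$, and additivity from $x_0 + \tfrac{v+v'}{2} = \tfrac12(x_0+v)+\tfrac12(x_0+v')$. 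A map that is additive and homogeneous near the origin extends uniquely to a linear functional, giving a unique affine extension of $f(\cdot, y)$ to $\aff(\mathcal{X})$. The obstacle is precisely that the reproducing hypothesis only constrains convex combinations, and this local-linearity argument is exactly what forces the values at non-convex affine combinations.

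Finally, I would assemble the bi-affine form explicitly via barycentric coordinates. Choosing affinely independent points $u_0, \ldots, u_k \in \mathcal{X}$ spanning $\aff(\mathcal{X})$, which exist because the relative interior is nonempty, every $z \in \aff(\mathcal{X})$ has unique barycentric coordinates $z = \sum_i \lambda_i(z) u_i$ with $\sum_i \lambda_i(z) = 1$ and each $\lambda_i$ affine. I would then set
\[ g(x,y) = \sum_{i=0}^k \sum_{j=0}^k \lambda_i(x)\,\lambda_j(y)\, f(u_i, u_j), \]
which is bi-affine by construction and symmetric because $f$ is. To verify $g = f$ on $\mathcal{X}^2$, I would note that by the second step the unique affine extension of $f(\cdot, y)$ must equal $\sum_i \lambda_i(\cdot)\, f(u_i, y)$, since two affine functions on $\aff(\mathcal{X})$ agreeing at the $k+1$ affinely independent points $u_i$ coincide; hence $f(x,y) = \sum_i \lambda_i(x) f(u_i, y)$ for $x \in \mathcal{X}$, even where the $\lambda_i(x)$ are negative. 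Applying the same identity in the second argument to each $f(u_i, \cdot)$ gives $f(u_i, y) = \sum_j \lambda_j(y) f(u_i, u_j)$, and substituting yields $f(x,y) = g(x,y)$, completing the construction.
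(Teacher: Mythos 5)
Your proposal is correct in substance but follows a genuinely different route from the paper's. The paper's argument is a single global construction: every point of $\aff(\mathcal{X}) = \aff(S)$ is an affine combination of points of $S$, so the paper \emph{defines} $g(x,y) = \sum_{r,s}\alpha_r\beta_s f(u_r,v_s)$ directly from such representations; the entire burden is then well-definedness, which is handled by a sign-rearrangement trick --- moving negatively-weighted terms across the equality $\sum_r \alpha_r u_r = \sum_r \gamma_r t_r$ so that both sides become non-negative combinations with equal total mass $c$, dividing by $c$ to obtain two equal \emph{convex} combinations, and only then invoking Property~\ref{prop:propertyp}. Your route instead localises: you first upgrade the reproducing property from $S$ to all of $\mathcal{X}$ (correct, and a useful observation), then linearise the increment of $f(\cdot,y)$ at a relative interior point $x_0$ via the midpoint/reflection identities to build an affine function on $\aff(\mathcal{X})$, and finally assemble $g$ explicitly in barycentric coordinates. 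What your approach buys is structural clarity: the extension is an explicit formula, its uniqueness comes for free, and the argument isolates exactly where negative coefficients are forced upon you. What the paper's approach buys is brevity: it never needs relative interiors, neighbourhoods of the origin, or the additivity/homogeneity analysis.

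One loose end you should close. Your Step 2 produces an affine function $A_y$ on $\aff(\mathcal{X})$ that, as constructed, agrees with $f(\cdot,y)$ only on a neighbourhood of $x_0$; but Step 3 uses $A_y(u_i) = f(u_i,y)$ at points $u_i$ that may lie far from $x_0$, and the final substitution requires $f(x,y) = \sum_i \lambda_i(x) f(u_i,y)$ for \emph{every} $x \in \mathcal{X}$. You therefore need to show that local agreement implies agreement on all of $\mathcal{X}$. This is one line with tools you already have: for $x \in \mathcal{X}$, both $\lambda \mapsto f\{(1-\lambda)x_0 + \lambda x, y\}$ (affine in $\lambda$ on $[0,1]$ by your Step 1) and $\lambda \mapsto A_y\{(1-\lambda)x_0+\lambda x\}$ are affine functions of $\lambda$ that agree for all sufficiently small $\lambda \ge 0$, hence agree at $\lambda = 1$, giving $f(x,y) = A_y(x)$. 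With that inserted, your proof is complete.
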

\begin{lemma}\label{lem:bi-linear}
Suppose $g : \text{aff}(\mathcal{X}) \times \text{aff}(\mathcal{X}) \rightarrow \R$ is a bi-affine form. Let $\ell = \dim\{\aff(\mathcal{X})\} \leq l$. Then there exist a matrix $\bR \in \R^{(\ell+1) \times l}$, a vector $\mu \in \R^{\ell+1}$, and a bilinear form $h: \R^{(\ell+1)} \times \R^{(\ell+1)} \rightarrow \R$ such that $g(x,y) = h(\bR x + \mu, \bR y + \mu)$, for all $x,y \in \aff(\mathcal{X})$.
\end{lemma}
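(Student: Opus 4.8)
The plan is to realise $\aff(\mathcal{X})$ as an affine slice of $\R^{\ell+1}$ through \emph{barycentric (homogeneous) coordinates}, under which the affine inhomogeneity of $g$ is absorbed into a single extra coordinate and $g$ becomes a genuine bilinear form. The one extra dimension in the statement is precisely this homogenising coordinate, and working intrinsically with the $\ell$-dimensional affine hull (rather than the ambient $\R^{l}$) is what keeps the target dimension at $\ell+1$ rather than $l+1$.

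Concretely, first I would choose points $p_0, p_1, \ldots, p_\ell \in \aff(\mathcal{X})$ that are affinely independent and whose affine hull is all of $\aff(\mathcal{X})$; such a set exists because $\dim\{\aff(\mathcal{X})\} = \ell$. Every $x \in \aff(\mathcal{X})$ then has unique barycentric coordinates $\lambda_0(x), \ldots, \lambda_\ell(x)$ satisfying $x = \sum_{i=0}^\ell \lambda_i(x)\, p_i$ and $\sum_{i=0}^\ell \lambda_i(x) = 1$. Collecting these into a map $\beta(x) = (\lambda_0(x), \ldots, \lambda_\ell(x))^\top \in \R^{\ell+1}$, I would check that $\beta$ is affine: writing the defining conditions as the linear system $P\,\beta(x) = (x^\top, 1)^\top$, where $P \in \R^{(l+1)\times(\ell+1)}$ stacks the $p_i$ as columns above a row of ones, affine independence makes $P$ of full column rank, so $\beta(x) = P^{+}(x^\top,1)^\top$ depends affinely on $x$. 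Splitting the fixed matrix $P^{+}$ into its first $l$ columns and its last column yields the required $\bR \in \R^{(\ell+1)\times l}$ and $\mu \in \R^{\ell+1}$ with $\beta(x) = \bR x + \mu$.

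Next I would define the candidate bilinear form by its Gram matrix on the chosen basis: set $G_{ij} = g(p_i, p_j)$ for $i,j \in \{0,\ldots,\ell\}$, giving $G \in \R^{(\ell+1)\times(\ell+1)}$, and put $h(u,w) = u^\top G\, w$, which is bilinear by construction. The verification then rests on the key structural fact that a bi-affine form distributes over affine combinations (coefficients summing to one): since $\sum_i \lambda_i(x) = \sum_j \lambda_j(y) = 1$, applying affinity in each argument in turn yields
\[
g(x,y) = g\Big(\textstyle\sum_i \lambda_i(x) p_i,\ \sum_j \lambda_j(y) p_j\Big) = \sum_{i,j} \lambda_i(x)\lambda_j(y)\, g(p_i,p_j) = \beta(x)^\top G\, \beta(y) = h(\bR x + \mu,\ \bR y + \mu),
\]
for all $x,y \in \aff(\mathcal{X})$, which is the claim. (If $g$ is symmetric then $G$ is symmetric and hence so is $h$, a fact needed downstream when combining with Lemma~\ref{lem:bi-affine} to obtain the $\indefI$ representation of Theorem~\ref{thm:representation}.)

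The main obstacle --- really the only nontrivial point --- is justifying that bi-affinity permits distributing over the affine combinations above, where the coefficients $\lambda_i$ sum to one but need not be nonnegative. This follows from the elementary fact that any affine function preserves affine combinations of unit coefficient sum; I would establish it first for two-term combinations directly from the definition of bi-affine form and then extend to arbitrary finite affine combinations by induction, before applying it separately in each slot. The remaining ingredients --- existence of an affinely independent spanning set, affineness of the barycentric map, and bilinearity of $h$ --- are routine linear algebra.
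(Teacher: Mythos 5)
Your proposal is correct and follows essentially the same route as the paper's proof: both are instances of the homogenisation trick, mapping an affine basis of $\aff(\mathcal{X})$ via an affine map $x \mapsto \bR x + \mu$ to a linear basis of $\R^{\ell+1}$ (you use barycentric coordinates, i.e.\ the standard basis; the paper uses vectors with last coordinate $1$), defining $h$ on that basis by the values of $g$, extending bilinearly, and concluding via the fact that a bi-affine form distributes over affine combinations. Your explicit remark that this distribution property over arbitrary (possibly negative-coefficient) affine combinations requires a short induction from the two-term definition is a point the paper leaves implicit.
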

As is well-known, because $h$ is a symmetric bilinear form on a finite-dimensional real vector space, it can be written $h(x,y) = x^\top \bJ y$ where $\bJ \in \R^{(\ell+1)\times(\ell+1)}$ is a symmetric matrix. Write $\bJ = \bV_d \bS_d \bV_d^\top$ where $\bV_d \in \R^{(\ell+1) \times d}$ has orthonormal columns, $\bS_d \in \R^{d \times d}$ is diagonal and has $p \geq 0$ positive followed by $q \geq 0$ negative eigenvalues on its diagonal, and $d = p+q = \rank(\bJ)$. Next, define $\bM = \bV_d |\bS_d|^{1/2}$. Then,
 \begin{align*}f(x,y) &= g(x,y) = h(\bR x + \mu, \bR y + \mu)\\ &= \left\{\bM (\bR x + \mu)\right\}^\top \indefI \left\{\bM (\bR y + \mu)\right\} = (\bT x + \nu)^\top \indefI (\bT y + \nu),\end{align*}
where $\bT = \bM \bR$ and $\nu = \bM \mu$. Since $f(x,x) \geq 0$ on $\mathcal{X} \times \mathcal{X}$, we must have $p > 0$ unless $f$ is uniformly zero over $\mathcal{X} \times \mathcal{X}$.

\begin{proof}[Proof of Lemma~\ref{lem:bi-affine}]
The ``if'' part of the proof is straightforward. Here, we prove the ``only if''. By definition, any $x,y \in \text{aff}(\mathcal{X}) = \text{aff}(S)$ can be written $x =\sum \alpha_r u_r$, $y = \sum \beta_r v_r$ where  $u_r, v_r \in S$, $\alpha_r,\beta_r \in \R$, and $\sum \alpha_r = \sum \beta_r = 1$. For any such $x,y$, we define $g(x,y) = \sum_{r,s} \alpha_r \beta_s f(u_r, v_s)$.

Suppose that $\sum \alpha_r u_r = \sum \gamma_r t_r, \sum \beta_r v_r = \sum \delta_r w_r$ where $t_r, w_r \in S$, $\gamma_r,\delta_r, \in \R$, and $\sum \gamma_r = \sum \delta_r = 1$. Rearrange the first equality to $\sum \alpha'_r u'_r = \sum \gamma'_r t'_r$ by moving any $\alpha_r u_r$ term where $\alpha_r < 0$ to the right --- so that the corresponding new coefficient is $\alpha'_s = - \alpha_r$, for some $s$ --- and any $\gamma_r t_r$ term where $\gamma_r < 0$ to the left, so that the corresponding new coefficient is $\gamma'_s = -\gamma_r$, for some $s$. Both linear combinations now involve only non-negative scalars. Furthermore, $\sum \alpha_r = \sum \gamma_r$ ($=1$) implies $\sum \alpha'_r = \sum \gamma'_r = c$, for some $c \geq 0$.

Then,  $\sum (\alpha'_r/c) u'_r = \sum (\gamma'_r/c) t'_r$ are two convex combinations, therefore, 
\begin{align*}
\sum (\alpha'_r/c) f(u_r',v) &= f\left\{\sum (\alpha'_r/c) u'_r, v\right\}
= f\left\{\sum (\gamma'_r/c) t'_r,v\right\}
= \sum (\gamma'_r/c) f(t'_r,v),
\end{align*}
for any $v \in S$, so that $\sum \alpha_r f(u_r, v) = \sum \gamma_r f(t_r, v)$. Therefore, 
\begin{align*}
\sum_{r,s} \alpha_r \beta_s f(u_r, v_s) &= \sum_s \beta_s \left\{\sum_r \gamma_r f(t_r, v_s)\right\}\\
& = \sum_r \gamma_r \left\{\sum_s \beta_s f(v_s, t_r)\right\} = \sum_{r,s} \gamma_r \delta_s f(t_r, w_s),
\end{align*}
so that $g$ is well-defined. The function $g$ is symmetric and it is also clear that $g\{\lambda x_1 + (1-\lambda) x_2, y\} = \lambda g(x_1,y) + (1-\lambda) g(x_2,y)$ for any $\lambda \in \R$, making it bi-affine by symmetry.
\end{proof}

The proof technique now used is known as the homogenisation trick in geometry \citep{gallier2000curves}.

\begin{proof}[Proof of Lemma~\ref{lem:bi-linear}]
Let $x_0, x_1, \ldots, x_{\ell} \in \R^{l}$ be an affine basis of $\aff(\mathcal{X})$. Then there exists an affine transformation $x \rightarrow \bR x + \mu$, mapping $x_0$ to $z_0 = (0, \ldots, 0, 1) \in \R^{\ell+1}$, $x_1$ to $z_1 = (1, 0, \ldots, 0, 1)$, and so forth, finally mapping $x_\ell$ to $z_{\ell} = (0, \dots, 0, 1, 1)$, where $\bR \in \R^{(\ell+1) \times l}$ and $\mu \in \R^{\ell+1}$. The vectors $z_0, \ldots, z_{\ell}$ form a basis of $\R^{\ell+1}$, so that if we set $h(z_i, z_j) = g(x_i, x_j)$ for $0 \leq i,j \leq \ell$, then the value $h$ is well-defined over $\R^{(\ell+1)} \times \R^{(\ell+1)}$ by bilinearity and basis expansion. Since any $x,y \in \aff(\mathcal{X})$ can be written $x =\sum_{r=0}^\ell \alpha_r x_r$, $y = \sum_{r=0}^\ell \beta_r x_r$ where $\alpha_r,\beta_r \in \R$, and $\sum \alpha_r = \sum \beta_r = 1$, we have 
\begin{align*}
g(x,y) &= \sum_{r,s} \alpha_r \beta_s g(x_r, x_s) = \sum_{r,s} \alpha_r \beta_s h(z_r, z_s)\\
& = h\left(\sum \alpha_r z_r, \sum \beta_r z_r\right) = h(\bR x + \mu, \bR y + \mu).
\end{align*}
\end{proof}

\section{Proof of Theorems~\ref{thrm:GRDPG_ASE_ttinf}~and~\ref{thrm:GRDPG_ASE_CLT}}
Broadly speaking, extending prior results on adjacency spectral embedding from the random dot product graph to the GRDPG requires new methods of analysis, that together represent the main technical contribution of this paper (mainly Theorems~\ref{thrm:GRDPG_ASE_ttinf} and \ref{thrm:GRDPG_ASE_CLT}). Further extending results to the case of Laplacian spectral embedding, while mathematically involved, follows \emph{mutatis mutandis} the machinery developed in \citet{tang2016limit}. Analogous Laplacian-based results (Theorems~\ref{thrm:GRDPG_LSE_ttinf} and \ref{thrm:GRDPG_LSE_CLT}) are therefore stated without proof.
\subsection{Preliminaries}
This proof synthesizes and adapts both the proof architecture and machinery developed in the papers \cite{tang_efficient,cape2017two,cape_biometrika}. We invoke the probabilistic concentration phenomena for GRDPGs presented in Lemma~7 of \cite{tang2016limit} as well as an eigenvector matrix series decomposition concisely analyzed in \cite{cape_biometrika}. This proof is not, however, a trivial corollary of earlier results, for it requires additional technical considerations and insight.

We recall the setting of Theorems~\ref{thrm:GRDPG_ASE_ttinf}~and~\ref{thrm:GRDPG_ASE_CLT} wherein the rows of $\mathbf{X}$ are independent replicates of the random vector $\rho_n^{1/2} \xi$, $\xi \sim F$, and for $i < j$ the $ij$-th entries of $\mathbf{A}$ are independent Bernoulli random variables with mean $X_i^{\top} \mathbf{I}_{p,q} X_j$. Here we shall allow self-loops for mathematical convenience since (dis)allowing self-loops is immaterial with respect to the asymptotic theory we pursue. 

Now for $\bP = \bX\indefI\bX^{\top}$, denote the low-rank spectral decomposition of $\bP$ by $\bP = \bU\bS\bU^{\top}$, where $\bU \in\mathbb{O}_{n,d}$ and $\bS\in\R^{d \times d}$. Write $\bU\equiv[\bU_{(+)}|\bU_{(-)}]$ with $\bU_{(+)} \in\mathbb{O}_{n,p}$ and  $\bU_{(-)}\in\mathbb{O}_{n,q}$ to indicate the orthonormal eigenvectors corresponding to the $p$ positive and $q$ negative non-zero eigenvalues of $\bP$, written in block-diagonal matrix form as $\bS = \bS_{(+)} \bigoplus \bS_{(-)} \in \R^{(p+q)\times(p+q)} \equiv \R^{d \times d}$. Denote the full spectral decomposition of $\bA$ by $\bA = \hat{\bU}\hat{\bS}\hat{\bU}^{\top} + \hat{\bU}_{\perp}\hat{\bS}_{\perp}\hat{\bU}_{\perp}^{\top}$, where $\hat{\bU} \in \defO_{n,d}$ denotes the matrix of leading (orthonormal) eigenvectors of $\bA$ and $\hat{\bS}\in\R^{d \times d}$ denotes the diagonal matrix containing the $d$ largest-in-magnitude eigenvalues of $\bA$ arranged in decreasing order. Here, the matrix $\hat{\bU}\hat{\bS}\hat{\bU}^{\top}$ corresponds to the best canonical rank $d$ representation of $\bA$. Also above, write $\hat{\bU} \equiv[\hat{\bU}_{(+)}|\hat{\bU}_{(-)}]$ such that the columns of $\hat{\bU}_{(+)}$ and $\hat{\bU}_{(-)}$ consist of orthonormal eigenvectors corresponding to the largest $p$ positive and $q$ negative non-zero eigenvalues of $\bA$, respectively.

We remark at the onset that for the GRDPG model, asymptotically almost surely $\|\bU\|_{\ttinf} = O(n^{-1/2})$ and $|\bS_{ii}|$, $|\hat{\bS}_{ii}| = \Theta((n\rho_{n}))$ for each $i = 1, \dots, d$. Simultaneously, $\|\bA-\bP\|=\probO((n\rho_{n})^{1/2})$ (regarding the latter, see for example~\cite{lu2013,lei2015consistency}).

Before going into the details of the proof, we first show that 
$\bU^{\top}\hat{\bU}$ is sufficiently close to an orthogonal matrix $\mathbf{W}_*$ with block diagonal structure that is simultaneously an element of $\mathbb{O}(p,q)$. To this end, the matrix $\bU^{\top}\hat{\bU}$ can be written in block form as
\begin{equation}
	\bU^{\top}\hat{\bU}
	= \Bigl[
	\begin{smallmatrix}
		\bU_{(+)}^{\top}\hat{\bU}_{(+)} &\vline& \bU_{(+)}^{\top}\hat{\bU}_{(-)} \\ \hline
		\bU_{(-)}^{\top}\hat{\bU}_{(+)} &\vline& \bU_{(-)}^{\top}\hat{\bU}_{(-)}
	\end{smallmatrix}
	\Bigr] \in \R^{d \times d},
\end{equation}
where
$\bU_{(+)}^{\top}\hat{\bU}_{(+)} \in \R^{p \times p}$,
$\bU_{(+)}^{\top}\hat{\bU}_{(-)} \in \R^{p \times q}$,
$\bU_{(-)}^{\top}\hat{\bU}_{(+)} \in \R^{q \times p}$,
and
$\bU_{(-)}^{\top}\hat{\bU}_{(-)} \in \R^{q \times q}$. 

Write the singular value decomposition of $\bU_{(+)}^{\top}\hat{\bU}_{(+)}\in\R^{p \times p}$ as
$\bU_{(+)}^{\top}\hat{\bU}_{(+)}
\equiv \bW_{(+),1}\bSigma_{(+)}\bW_{(+),2}^{\top}$, and define the orthogonal matrix $\bW_{(+)}^{\star}:= \bW_{(+),1}\bW_{(+),2}^{\top}\in\defO_{p}$. Similarly, let $\bW_{(-)}^{\star}\in\defO_{q}$ denote the orthogonal (product) matrix corresponding to $\bU_{(-)}^{\top}\hat{\bU}_{(-)}$. Now let $\bW_{\star}$ denote the structured orthogonal matrix
\begin{equation} \label{eq:wstar}
	\bW_{\star}
	=	\Bigl[
	\begin{smallmatrix}
		\bW_{(+)}^{\star} &\vline& \mathbf{0} \\ \hline
		\mathbf{0} &\vline& \bW_{(-)}^{\star}
	\end{smallmatrix}
	\Bigr] \in \defO_{d}.
\end{equation}
Observe that $\bW_{\star}\indefI\bW_{\star}^{\top}=\indefI$, hence simultaneously $\bW_{\star} \in \indefO$.
Via the triangle inequality, the spectral norm quantity $\|\bU^{\top}\hat{\bU} - \bW_{\star}\|$ is bounded above by four times the largest spectral norm of its blocks. The main diagonal blocks can be analyzed in a straightforward manner via canonical angles and satisfy 
\begin{equation}
	\|\bU_{(+)}^{\top}\hat{\bU}_{(+)} - \bW_{(+)}^{\star}\|,
	\|\bU_{(-)}^{\top}\hat{\bU}_{(-)} - \bW_{(-)}^{\star}\|
	= \probO((n\rho_{n})^{-1}).
\end{equation}
More specifically, let $\sigma_1, \sigma_2, \dots, \sigma_p$ be the singular values of $\bU_{(+)}^{\top}\hat{\bU}_{(+)}$. Then $\sigma_i = \cos(\theta_i)$ where $\theta_i$ are the principal angles between the subspaces spanned by $\bU_{(+)}$ and $\hat{\bU}_{(+)}$. The definition of $\bW_{(+)}$ implies
$$ \|\bU_{(+)}^{\top}\hat{\bU}_{(+)} - \bW_{(+)}^{\star} \|_{F} = \|\bSigma_{(+)} - \mathbf{I}\|_{F} = \Bigl(\sum_{i=1}^{p} (1 - \sigma_i)^2 \Bigr)^{1/2} \leq \sum_{i=1}^{p}(1 - \sigma_i^2) = \|\bU_{(+)} \bU_{(+)}^{\top} - \hat{\bU}_{(+)} \hat{\bU}_{(+)}^{\top}\|_{F}^{2}.$$
By the Davis-Kahan $\sin \Theta$ theorem (see e.g.~Section~VII.3 of \cite{Bhatia1997} or \cite{yu2015useful}), we have
$$\|\bU_{(+)}^{\top}\hat{\bU}_{(+)} - \bW_{(+)}^{\star} \|_{F} \leq \|\bU_{(+)} \bU_{(+)}^{\top} - \hat{\bU}_{(+)} \hat{\bU}_{(+)}^{\top}\|_{F}^{2} \leq \frac{C \|\mathbf{A} - \mathbf{P}\|^{2}}{\lambda_{p}(\mathbf{P})^{2}} = O_{\mathbb{P}}((n \rho_n)^{-1}),$$
where $\lambda_{p}(\mathbf{P})$ is the smallest positive eigenvalue of $\mathbf{P}$. The bound $\|\bU_{(-)}^{\top}\hat{\bU}_{(-)} - \bW_{(-)}^{\star}\| = O_{\mathbb{P}}((n \rho_n)^{-1})$ is derived similarly.

We now bound the quantities $\|\bU_{(+)}^{\top}\hat{\bU}_{(-)}\|$. Let $\bm{u}_{i,(+)}$ and $\hat{\bm{u}}_{j,(-)}$ be arbitrary columns of $\bU_{(+)}$ and $\hat{\bU}_{(-)}$, respectively. Note that the $ij$-th entry of $\bU_{(+)}^{\top}\hat{\bU}_{(-)}$ is $(\bm{u}_{i,(+)})^{\top} \hat{\bm{u}}_{j,(-)}$ and that $\lambda_{i,(+)} (\bm{u}_{i,(+)})^{\top} \hat{\bm{u}}_{j,(-)} = (\bm{u}_{i,(+)})^{\top} \mathbf{P} \hat{\bm{u}}_{j,(-)}$, $\hat{\lambda}_{j,(-)} (\bm{u}_{i,(+)})^{\top} \hat{\bm{u}}_{j,(-)} = (\bm{u}_{i,(+)})^{\top} \mathbf{A} \hat{\bm{u}}_{j,(-)}$ where $\lambda_{i,(+)}$ (resp.~$\hat{\lambda}_{j,(-)}$) is the $i$-th (resp.~$j$-th) largest in modulus positive eigenvalue (resp.~negative eigenvalue) of $\mathbf{P}$ (resp.~$\mathbf{A}$). We therefore have
\begin{equation*}
\begin{split}
(\bm{u}_i^{(+)})^{\top} \hat{\bm{u}}_j^{(-)} &= (\hat{\lambda}_{j,(-)} - \lambda_{i,(+)})^{-1} (\bm{u}_{i,(+)})^{\top} (\mathbf{A} - \mathbf{P}) \hat{\bm{u}}_{j,(-)} \\ &= (\hat{\lambda}_{j,(-)} - \lambda_{i,(+)})^{-1} (\bm{u}_{i,(+)})^{\top} (\mathbf{A} - \mathbf{P}) \bU_{(-)} \bU_{(-)}^{\top} \hat{\bm{u}}_{j,(-)} \\ &+ (\hat{\lambda}_{j,(-)} - \lambda_{i,(+)})^{-1} (\bm{u}_{i,(+)})^{\top} (\mathbf{A} - \mathbf{P}) (\mathbf{I} - \bU_{(-)} \bU_{(-)}^{\top}) \hat{\bm{u}}_{j,(-)}.
\end{split}
\end{equation*}
The term $(\bm{u}_{i,(+)})^{\top} (\mathbf{A} - \mathbf{P}) \bU_{(-)}$ is a vector in $\R^{q}$, and conditional on $\mathbf{P}$, each element of $(\bm{u}_{i,(+)})^{\top} (\mathbf{A} - \mathbf{P}) \bU_{(-)}$ can be written as a sum of independent random variables. Hence, by Hoeffding's inequality, $\|(\bm{u}_{i,(+)})^{\top} (\mathbf{A} - \mathbf{P}) \bU_{(-)}\| = O_{\mathbb{P}}(\log{n})$. Furthermore, by the Davis-Kahan theorem, $\|(\mathbf{I} - \bU_{(-)} \bU_{(-)}^{\top}) \hat{\bm{u}}_{j,(-)}\| = O_{\mathbb{P}}((n \rho_n)^{-1/2})$. We therefore have
\begin{gather} 
\label{eq:gather1}
\|(\hat{\lambda}_{j,(-)} - \lambda_{i,(+)})^{-1} (\bm{u}_{i,(+)})^{\top} (\mathbf{A} - \mathbf{P}) \bU_{(-)} \bU_{(-)}^{\top} \hat{\bm{u}}_{j,(-)}\| = O_{\mathbb{P}}((n \rho_n^{-1}) \log{n}); \\
\label{eq:gather2}
\|(\hat{\lambda}_{j,(-)} - \lambda_{i,(+)})^{-1} (\bm{u}_{i,(+)})^{\top} (\mathbf{A} - \mathbf{P}) (\mathbf{I} - \bU_{(-)} \bU_{(-)}^{\top}) \hat{\bm{u}}_{j,(-)}\| = O_{\mathbb{P}}((n \rho_n)^{-1}).
\end{gather}
Equations~\eqref{eq:gather1}~and~\eqref{eq:gather2} together imply
\begin{align}
	\label{eq:offDiagUtopUhat}
	\|\bU_{(+)}^{\top}\hat{\bU}_{(-)}\|
	= \probO((n\rho_{n})^{-1}(\log n)),
\end{align}
thus $\|\bU^{\top}\hat{\bU} - \bW_{\star}\| = \probO((n\rho_{n})^{-1}(\log n))$.

\subsection{Proof details}
We now proceed with the proof of Theorem~\ref{thrm:GRDPG_ASE_ttinf}~and~Theorem~\ref{thrm:GRDPG_ASE_CLT}. 
The matrix relation $\hat{\bU}\hat{\bS}=\bA\hat{\bU}=(\bP+(\bA-\bP))\hat{\bU}$ yields the matrix equation $\hat{\bU}\hat{\bS}-(\bA-\bP)\hat{\bU} = \bP\hat{\bU}$. The spectra of $\hat{\bS}$ and $\bA-\bP$ are disjoint asymptotically almost surely, so $\hat{\bU}$ can be written as a matrix series of the form (see e.g. Theorem~VII.2.1 and Theorem~VII.2.2 of \cite{Bhatia1997})
\begin{equation}
	\label{eq:Uhat_expansion}
	\hat{\bU}
	= \sum_{k=0}^{\infty}(\bA-\bP)^{k}\bP\hat{\bU}\hat{\bS}^{-(k+1)}
	= \sum_{k=0}^{\infty}(\bA-\bP)^{k}\bU\bS\bU^{\top}\hat{\bU}\hat{\bS}^{-(k+1)}.
\end{equation}
By scaling the matrix $\hat{\bU}$ by $|\hat{\bS}|^{1/2}$, observing that $\hat{\bS}=\indefI|\hat{\bS}|$, and applying a well-thought-out ``plus zero" trick, we arrive at the decomposition 
\begin{align*}
	\hat{\bU}|\hat{\bS}|^{1/2}
	&= \sum_{k=0}^{\infty}(\bA-\bP)^{k}\bU\bS\bU^{\top}\hat{\bU}\indefI^{k+1}|\hat{\bS}|^{-k-1/2}\\
	&= \sum_{k=0}^{\infty}(\bA-\bP)^{k}\bU\indefI|\bS|^{-k+1/2}\bW_{\star}\indefI^{k+1}\\
	& \hspace{1em} + \sum_{k=0}^{\infty}(\bA-\bP)^{k}\bU\indefI|\bS|^{-k+1/2}(\bU^{\top}\hat{\bU}-\bW_{\star})\indefI^{k+1}\\
	& \hspace{1em} + \sum_{k=0}^{\infty}(\bA-\bP)^{k}\bU\bS(\bU^{\top}\hat{\bU}\indefI^{k+1}|\hat{\bS}|^{-k-1/2} - |\bS|^{-k-1/2}\bU^{\top}\hat{\bU}\indefI^{k+1})\\
	&:= \bV_{1} + \bV_{2} + \bV_{3}.
\end{align*}
\subsubsection{The matrix $\bV_{1}$}
Diagonal matrices commute, as do the matrices $\indefI$ and $\bW_{\star}$, so $\bV_{1}$ can be written as
\begin{equation}
	\bV_{1}
	\equiv \sum_{k=0}^{\infty}(\bA-\bP)^{k}\bU|\bS|^{-k+1/2}\bW_{\star}\indefI^{k+2}
	= \bU|\bS|^{1/2}\bW_{\star}
	+ (\bA-\bP)\bU|\bS|^{-1/2}\bW_{\star}\indefI + \bR_{\bV_{1}},
\end{equation}
where $\bR_{\bV_{1}} = \sum_{k=2}^{\infty}(\bA-\bP)^{k}\bU|\bS|^{-k+1/2}\bW_{\star}\indefI^{k+2}$.
We now use the following slight restatement of Lemma~7.10 from \cite{erdos}. This result was also noted in \cite{mao_sarkar}.
\begin{lemma}
\label{lem:erdos}
Assume the setting and notations in Theorem~\ref{thrm:GRDPG_ASE_ttinf}. Let $\bm{u}_j$ be the $j$-th column of $\mathbf{U}$ for $j = 1,2, \dots, d$. Then there exists a (universal) constant $c  > 1$ such that for all $k \leq \log{n}$ 
$$ \|(\mathbf{A} - \mathbf{P})^{k} \mathbf{U} \|_{2 \to \infty} \leq
d^{1/2} \max_{j \in [d]} \|(\mathbf{A} - \mathbf{P})^{k} \bm{u}_j\|_{\infty} = O_{\mathbb{P}}\Bigl(\frac{(n \rho_n)^{k/2} \log^{kc}(n)}{n^{1/2}}\Bigr).$$
\end{lemma}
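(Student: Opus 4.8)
The plan is to first dispense with the matrix $2\to\infty$ norm and reduce everything to a scalar concentration statement. The displayed inequality $\|(\mathbf{A}-\mathbf{P})^k\mathbf{U}\|_{2\to\infty} \le d^{1/2}\max_{j\in[d]}\|(\mathbf{A}-\mathbf{P})^k\bm u_j\|_{\infty}$ is immediate, since the Euclidean norm of any row of a matrix with $d$ columns is at most $d^{1/2}$ times its largest entry in modulus. Writing $\mathbf{H} := \mathbf{A}-\mathbf{P}$, the matrix $\mathbf{H}$ is symmetric with independent centred entries above the diagonal satisfying $|\mathbf{H}_{ab}|\le 1$ and $\mathrm{Var}(\mathbf{H}_{ab}) = \mathbf{P}_{ab}(1-\mathbf{P}_{ab}) = O(\rho_n)$, because the entries of $\mathbf{P}$ are of order $\rho_n$; hence $\Ex[|\mathbf{H}_{ab}|^{r}] \le \Ex[\mathbf{H}_{ab}^{2}] = O(\rho_n)$ for every $r \ge 2$. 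I would also invoke the delocalisation estimate $\|\bm u_j\|_{\infty} \le \|\mathbf{U}\|_{\ttinf} = O(n^{-1/2})$ recorded at the onset of the proof. It therefore suffices to show, for each fixed $i\in[n]$ and $j\in[d]$, that $(\mathbf{H}^{k}\bm u_j)_i = \probO\bigl((n\rho_n)^{k/2}n^{-1/2}\log^{kc}(n)\bigr)$, uniformly over $k\le\log n$.

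The scalar $(\mathbf{H}^{k}\bm u_j)_i$ depends on $\mathbf{H}$ in a strongly non-linear way --- the powers of $\mathbf{H}$ and the vector $\bm u_j$ cannot be decoupled --- so a direct conditioning or martingale argument stumbles on dependence. I would instead use the high-moment (combinatorial walk) method, which sidesteps independence issues by computing expectations of high \emph{even} powers directly. The starting point is the walk expansion
\[
(\mathbf{H}^{k}\bm u_j)_i = \sum_{i_1,\dots,i_k} \mathbf{H}_{i i_1}\mathbf{H}_{i_1 i_2}\cdots \mathbf{H}_{i_{k-1}i_k}\,(\bm u_j)_{i_k},
\]
a sum over walks of length $k$ rooted at $i$. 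Raising to the power $2m$ and taking expectations produces a sum over $2m$-tuples of such walks. By independence of the entries across distinct unordered pairs $\{a,b\}$ and by $\Ex[\mathbf{H}_{ab}]=0$, a tuple contributes nothing unless every distinct edge appearing in the union of the $2m$ walks is traversed at least twice; for the surviving tuples the expectation factorises into a product of per-edge moments, each $O(\rho_n)$.

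The heart of the argument is the resulting combinatorial bookkeeping. Because all $2m$ walks share the root $i$, the union multigraph is connected, and with every distinct edge used at least twice the number of distinct edges is at most $mk$, whence the number of distinct vertices is at most $mk+1$. Summing over internal vertices contributes a factor $n$ per vertex beyond the root, each distinct edge contributes a variance factor $O(\rho_n)$, and the $2m$ endpoint slots contribute the weights $(\bm u_j)_{i_k}$, controlled through $\|\bm u_j\|_{\infty}=O(n^{-1/2})$ together with $\|\bm u_j\|_2 = 1$. The dominant, tree-like configurations (each edge used exactly twice, no cycles) scale as $n^{mk}\rho_n^{mk}n^{-m} = [(n\rho_n)^{k/2}n^{-1/2}]^{2m}$, exactly the target rate, while configurations containing cycles or higher edge multiplicities are smaller by powers of $(n\rho_n)^{-1}$. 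Multiplying by the number of admissible walk shapes, bounded by a factor of the form $(Ck)^{O(mk)}$, yields
\[
\Ex\bigl[(\mathbf{H}^{k}\bm u_j)_i^{2m}\bigr] \le \bigl(C_{k,m}\,(n\rho_n)^{k/2}n^{-1/2}\bigr)^{2m},
\]
with $C_{k,m}$ growing only polynomially in $k$ and $m$. Finally I would optimise the free moment order, taking $2m \asymp \log n$, apply Markov's inequality, and union bound over the $n$ coordinates, the $d$ columns, and the $\le\log n$ values of $k$: with $m$ of order $\log n$ the combinatorial prefactor $C_{k,m}$ contributes precisely the $\log^{kc}(n)$ term, and the probability of exceeding a constant multiple of $(n\rho_n)^{k/2}n^{-1/2}\log^{kc}(n)$ is $n^{-c'}$ for arbitrarily large $c'$. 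The main obstacle is the walk-counting step: tracking higher edge multiplicities and cycle contributions tightly enough to pin down the polylogarithmic exponent uniformly in $k$ --- and reconciling the endpoint summations against the $\ell_2$-normalisation of $\bm u_j$ --- is delicate, and this is exactly the estimate furnished by Lemma~7.10 of \cite{erdos}, which I would import rather than re-derive.
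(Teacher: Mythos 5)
Your proposal is correct and takes essentially the same route as the paper: the paper offers no self-contained proof either, presenting the lemma as a ``slight restatement of Lemma~7.10 from \cite{erdos}'' (also noted in \cite{mao_sarkar}), with the reduction $\|(\mathbf{A}-\mathbf{P})^{k}\mathbf{U}\|_{\ttinf} \le d^{1/2}\max_{j\in[d]}\|(\mathbf{A}-\mathbf{P})^{k}\bm{u}_j\|_{\infty}$ being the immediate step, exactly as you observe. Your walk/moment-method sketch is a fair heuristic for what underlies the cited estimate, but since you too ultimately import Lemma~7.10 for the hard concentration bound rather than re-derive it, your argument and the paper's coincide.
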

Thus, for $c > 0$ as above,
\begin{align*}
	\|\bR_{\bV_{1}}\|_{\ttinf}
	&\le \sum_{k=2}^{\log n}\|(\bA-\bP)^{k}\bU\|_{\ttinf}\||\bS|^{-1}\|^{k-1/2}
	+ \sum_{k > \log n}\|\bA-\bP\|^{k}\||\bS|^{-1}\|^{k-1/2}\\
	&= \sum_{k=2}^{\log n}\probO\left(\tfrac{d^{1/2}(\log n)^{k c}}{n^{1/2}(n\rho_{n})^{k/2-1/2}}\right) + \sum_{k > \log n}\probO\left((n\rho_{n})^{-k/2+1/2}\right)\\
	&= \probO\left(\tfrac{d^{1/2}(\log n)^{2 c}}{n^{1/2}(n\rho_{n})^{1/2}}\right)
	+ \probO\left((n\rho_{n})^{-(\log n)/2}\right)\\
	&= \probO\left(\tfrac{d^{1/2}(\log n)^{2 c}}{n^{1/2}(n\rho_{n})^{1/2}}\right)
	+ \probO\left(\tfrac{1}{n^{1/2}(n\rho_{n})^{1/2}}\right).
\end{align*}
Moving forward, we set forth to make precise the sense in which
\begin{align*}
	\hat{\bU}|\hat{\bS}|^{1/2}
	&= \bU|\bS|^{1/2}\bW_{\star} + (\bA-\bP)\bU|\bS|^{-1/2}\bW_{\star}\indefI
	+ \bR_{\bV_{1}} + \bV_{2} + \bV_{3}\\
	&\approx \bU|\bS|^{1/2}\bW_{\star} + (\bA-\bP)\bU|\bS|^{-1/2}\bW_{\star}\indefI.
\end{align*}
\subsubsection{The matrix $\bV_{2}$}
For the matrix $\bV_{2}:=\sum_{k=0}^{\infty}(\bA-\bP)^{k}\bU\indefI|\bS|^{-k+1/2}(\bU^{\top}\hat{\bU}-\bW_{\star})\indefI^{k+1}$, it is sufficient to observe that by properties of two-to-infinity norm and the bounds established above,
\begin{align*}
	\|\bV_{2}\|_{\ttinf}
	&\le \|\bU\|_{\ttinf}\||\bS|^{1/2}\|\|\bU^{\top}\hat{\bU}-\bW_{\star}\| + \|(\bA-\bP)\bU\|_{\ttinf}\||\bS|^{-1}\|^{1/2}\|\bU^{\top}\hat{\bU}-\bW_{\star}\| + \|\bR_{\bV_{2}}\|_{\ttinf}\\
	&= \probO\left(\tfrac{\log n}{n^{1/2}(n\rho_{n})^{1/2}}\right)
	+ \probO\left(\tfrac{d^{1/2}(\log n)^{c+1}}{n^{1/2}(n\rho_{n})}\right)
	+ \probo(\|\bR_{\bV_{1}}\|_{\ttinf})\\
	&= \probO\left(\tfrac{d^{1/2}(\log n)^{2 c}}{n^{1/2}(n\rho_{n})^{1/2}}\right).
\end{align*}
In the above, we write that the random variable $Y \in \R$ is $\probo(f(n))$ if for any positive constant $c>0$ and any $\epsilon > 0$ there exists an $n_{0}$ such that for all $n \ge n_{0}$, $|Y| \le \epsilon f(n)$ with probability at least $1-n^{-c}$. 

\subsubsection{The matrix $\bV_{3}$}
The matrix $\bV_{3}$ is given by
$\bV_{3} = \sum_{k=0}^{\infty}(\bA-\bP)^{k}\bU\bS(\bU^{\top}\hat{\bU}\indefI^{k+1}|\hat{\bS}|^{-k-1/2} - |\bS|^{-k-1/2}\bU^{\top}\hat{\bU}\indefI^{k+1})$.
For each $k=0,1,2,\dots$, define the matrix $\bM_{k} := (\bU^{\top}\hat{\bU}\indefI^{k+1}|\hat{\bS}|^{-k-1/2} - |\bS|^{-k-1/2}\bU^{\top}\hat{\bU}\indefI^{k+1})$. Entry $ij$ of the matrix $\bM_{k}$ can be written as
\begin{align*}
	(\bM_{k})_{ij}
	&= \langle u_{i}, \hat{u}_{j} \rangle (\indefI^{k+1})_{jj} \left[|\hat{\bS}_{jj}|^{-k-1/2}
	- |\bS_{ii}|^{-k-1/2}\right]\\
	&= \langle u_{i}, \hat{u}_{j} \rangle (\indefI^{k+1})_{jj} \left[|\hat{\bS}_{jj}|^{-2k-1}
	- |\bS_{ii}|^{-2k-1}\right] \left[|\hat{\bS}_{jj}|^{-k-1/2}
	+ |\bS_{ii}|^{-k-1/2}\right]^{-1}\\
	&= - \langle u_{i}, \hat{u}_{j} \rangle (\indefI^{k+1})_{jj} \left[|\hat{\bS}_{jj}|^{2k+1}
	- |\bS_{ii}|^{2k+1}\right]
	\left[|\hat{\bS}_{jj}|^{-k-1/2}
	+ |\bS_{ii}|^{-k-1/2}\right]^{-1}
	|\hat{\bS}_{jj}|^{-2k-1}
	|\bS_{ii}|^{-2k-1}\\
	&= - \langle u_{i}, \hat{u}_{j} \rangle (\indefI^{k+1})_{jj} \left[|\hat{\bS}_{jj}|
	- |\bS_{ii}|\right]
	\left[
	\sum_{l=0}^{2k}|\hat{\bS}_{jj}|^{l}|\bS_{ii}|^{2k-l}
	\right]
	\left[|\hat{\bS}_{jj}|^{-k-1/2}
	+ |\bS_{ii}|^{-k-1/2}\right]^{-1}
	|\hat{\bS}_{jj}|^{-2k-1}
	|\bS_{ii}|^{-2k-1}.
\end{align*}
For each $k$, further define a matrix $\bH_{k} \in \R^{d \times d}$ entrywise as
\begin{equation}
	\label{eq:matrixH_k_entry}
	(\bH_{k})_{ij} :=
	\left[
	\sum_{l=0}^{2k}|\hat{\bS}_{jj}|^{l}|\bS_{ii}|^{2k-l}
	\right]
	\left[
	|\hat{\bS}_{jj}|^{-k-1/2}
	+ |\bS_{ii}|^{-k-1/2}
	\right]^{-1}
	|\hat{\bS}_{jj}|^{-2k-1}
	|\bS_{ii}|^{-2k-1},
\end{equation}
where it follows that
\begin{equation}
	\label{eq:matrixH_k_entrybound}
	(\bH_{k})_{ij} = \probO((k+1)(n\rho_{n})^{-k-3/2}).
\end{equation}
Letting $\circ$ denote the Hadamard matrix product, we arrive at the decomposition
\begin{equation}
	\label{eq:matrixM_k}
	\bM_{k} = -\bH_{k} \circ (\bU^{\top}\hat{\bU}\indefI^{k+1}|\hat{\bS}| - |\bS|\bU^{\top}\hat{\bU}\indefI^{k+1}).
\end{equation}
The matrices $\bU^{\top}\hat{\bU}$ and $\indefI$ approximately commute. More precisely,
\begin{equation}
	\left(
	\indefI\bU^{\top}\hat{\bU} - \bU^{\top}\hat{\bU}\indefI
	\right)
	=
	\Bigl[
	\begin{smallmatrix}
		\mathbf{0} &\vline& 2\bU_{(+)}^{\top}\hat{\bU}_{(-)} \\ \hline
		-2\bU_{(-)}^{\top}\hat{\bU}_{(+)} &\vline& \mathbf{0}
	\end{smallmatrix}
	\Bigr] \in \R{^{d \times d}},
\end{equation}
so by Eq.~(\ref{eq:offDiagUtopUhat}), the spectral norm of this matrix difference behaves as $\probO((n\rho_{n})^{-1}(\log n))$. This approximate commutativity is important in light of further decomposing the matrix $\bM_{k}$ as
\begin{align*}
	\label{eq:matrixM_k_expanded}
	\bM_{k}
	&= 
	-\bH_{k} \circ (\bU^{\top}\hat{\bU}\indefI^{k+1}|\hat{\bS}| - |\bS|\bU^{\top}\hat{\bU}\indefI^{k+1})\\
	&=
	-\bH_{k} \circ \left(
	\left(\bU^{\top}\hat{\bU}\indefI|\hat{\bS}|\indefI^{k} - |\bS|\indefI\bU^{\top}\hat{\bU}\indefI^{k}\right)
	+ |\bS|\left(\indefI\bU^{\top}\hat{\bU}-\bU^{\top}\hat{\bU}\indefI\right)\indefI^{k}\right)\\
	&=
	-\bH_{k} \circ \left(
	\left(
	\bU^{\top}\hat{\bU}\hat{\bS} - \bS\bU^{\top}\hat{\bU}
	\right)\indefI^{k}
	+ |\bS|\left(\indefI\bU^{\top}\hat{\bU}-\bU^{\top}\hat{\bU}\indefI\right)\indefI^{k}\right).
\end{align*}
We note that $\bU^{\top}\hat{\bU}\hat{\bS} - \bS\bU^{\top}\hat{\bU} = \mathbf{U}^{\top} (\mathbf{A} - \mathbf{P}) \hat{\bU} =
\mathbf{U}^{\top} (\mathbf{A} - \mathbf{P}) \bU \bU^{\top} \hat{\bU} + \mathbf{U}^{\top} (\mathbf{A} - \mathbf{P}) (\mathbf{I} - \bU \bU^{\top}) \hat{\bU}$ and once again, by Hoeffding's inequality and the Davis-Kahan theorem, we have
$\|\bU^{\top}\hat{\bU}\hat{\bS} - \bS\bU^{\top}\hat{\bU}\| = \probO(\log n)$, so $\bM_{k}$ can be bounded as
\begin{align*}
	\|\bM_{k}\|
	&\le \|\bH_{k}\|
	\|(
	\bU^{\top}\hat{\bU}\hat{\bS} - \bS\bU^{\top}\hat{\bU}
	)\indefI^{k}
	+ |\bS|(\indefI\bU^{\top}\hat{\bU}-\bU^{\top}\hat{\bU}\indefI)\indefI^{k}\|\\
	&\le d\|\bH_{k}\|_{\rmMax}
	\left[
	\|\bU^{\top}\hat{\bU}\hat{\bS} - \bS\bU^{\top}\hat{\bU}\| + \||\bS|\|\|\indefI\bU^{\top}\hat{\bU}-\bU^{\top}\hat{\bU}\indefI\|
	\right]\\
	&= \probO(d(k+1)(n\rho_{n})^{-k-3/2})
	\left[
	\probO(\log n)
	+
	\probO((n\rho_{n}) \times (n\rho_{n})^{-1}(\log n))
	\right]\\
	&= \probO(d(k+1)(\log n)(n\rho_{n})^{-k-3/2}).
\end{align*}
Hence, for the matrix $\bV_{3}$,
\begin{align*}
	\|\bV_{3}\|_{\ttinf}
	&\le \|\bU\bS\bM_{0}\|_{\ttinf} + \|(\bA-\bP)\bU\bS\bM_{1}\|_{\ttinf} + \sum_{k=2}^{\infty}\|(\bA-\bP)^{k}\bU\bS\bM_{k}\|_{\ttinf},
\end{align*}
where
\begin{align*}
	\|\bU\bS\bM_{0}\|_{\ttinf}
	&\le \|\bU\|_{\ttinf}\|\bS\|\bM_{0}\|
	= \probO\left(\tfrac{d(\log n)}{n^{1/2}(n\rho_{n})^{1/2}}\right),\\
	\|(\bA-\bP)\bU\bS\bM_{1}\|_{\ttinf}
	&\le \|(\bA-\bP)\bU\|_{\ttinf}\|\bS\|\|\bM_{1}\|
	= \probO\left(\tfrac{d^{3/2}(\log n)^{c+1}}{n^{1/2}(n\rho_{n})}\right),
\end{align*}
and
\begin{align*}
	\sum_{k=2}^{\infty}\|(\bA-\bP)^{k}\bU\bS\bM_{k}\|_{\ttinf}
	&\le \sum_{k=2}^{\log n}\|(\bA-\bP)^{k}\bU\bS\bM_{k}\|_{\ttinf}
	+ \sum_{k> \log n}^{}\|(\bA-\bP)^{k}\bU\bS\bM_{k}\|_{\ttinf}\\
	&\le \left(\tfrac{d(\log n)^{2}}{n\rho_{n}}\right)\sum_{k=2}^{\log n}\probO\left(\tfrac{d^{1/2}(\log n)^{kc}}{n^{1/2}(n\rho_{n})^{k/2-1/2}}\right)
	+ \left(\tfrac{d(\log n)}{n\rho_{n}}\right)\sum_{k > \log n}\probO(k (n\rho_{n})^{-k/2+1/2})\\
	&= \left(\tfrac{d(\log n)^{2}}{n\rho_{n}}\right)\probO\left(\tfrac{d^{1/2}(\log n)^{2c}}{n^{1/2}(n\rho_{n})^{1/2}}\right)
	+ \left(\tfrac{d(\log n)}{n\rho_{n}}\right)\probO\left((\log n) (n\rho_{n})^{-(\log n)/2}\right)\\
	&= \left(\tfrac{d(\log n)^{2}}{n\rho_{n}}\right)\probO\left(\tfrac{d^{1/2}(\log n)^{2c}}{n^{1/2}(n\rho_{n})^{1/2}}\right)
	+ \left(\tfrac{d(\log n)^{2}}{n\rho_{n}}\right)\probO\left(\tfrac{1}{n^{1/2}(n\rho_{n})^{1/2}}\right).
\end{align*}
Since $(n\rho_{n}) = \omega(d(\log n)^{4c})$ for $c > 1$, we have $(n \rho_{n}) = \omega(d (\log n)^{2})$. It follows that
\begin{equation*}
	\|\bV_{3}\|_{\ttinf}
	= \probO\left(\tfrac{d^{1/2}(\log n)^{2c}}{n^{1/2}(n\rho_{n})^{1/2}}\right).
\end{equation*}
\subsubsection{First and second-order characterization}
In summary, so far we have shown that
\begin{equation}
	\label{eq:scaled_decomposition_summary}
	\hat{\bU}|\hat{\bS}|^{1/2}
	= \bU|\bS|^{1/2}\bW_{\star}
	+ (\bA-\bP)\bU|\bS|^{-1/2}\bW_{\star}\indefI + \bR,
\end{equation}
for some (residual) matrix $\bR \in \R^{n \times d}$ satisfying $\|\bR\|_{\ttinf} = \probO\left(\tfrac{d^{1/2}(\log n)^{2c}}{n^{1/2}(n\rho_{n})^{1/2}}\right)$.

Now let $\bQ_{\bX}$ be such that $\bX = \bU|\bS|^{1/2}\bQ_{\bX}$. Rearranging the terms in Eq.~(\ref{eq:scaled_decomposition_summary}) and multiplying first by $\bW_{\star}^{\top}$ followed by $\bQ_{\bX}$ yields
\begin{align*}
	\hat{\bU}|\hat{\bS}|^{1/2}\bW_{\star}^{\top}\bQ_{\bX}
	- \bU|\bS|^{1/2}\bQ_{\bX}
	&= (\bA-\bP)\bU|\bS|^{-1/2}\indefI\bQ_{\bX} + \bR\bW_{\star}^{\top}\bQ_{\bX}\\
	&= (\bA-\bP)\bU|\bS|^{1/2}\bQ_{\bX}\bQ_{\bX}^{-1}|\bS|^{-1}\indefI\bQ_{\bX} + \bR\bW_{\star}^{\top}\bQ_{\bX}\\
	&= (\bA-\bP)\bX\bQ_{\bX}^{-1}|\bS|^{-1}\indefI\bQ_{\bX} + \bR\bW_{\star}^{\top}\bQ_{\bX}\\
	&= (\bA-\bP)\bX\bQ_{\bX}^{-1}|\bS|^{-1}\indefI\bQ_{\bX}\indefI\indefI + \bR\bW_{\star}^{\top}\bQ_{\bX}\\
	&= (\bA-\bP)\bX\bQ_{\bX}^{-1}|\bS|^{-1}(\bQ_{\bX}^{-1})^{\top}\indefI + \bR\bW_{\star}^{\top}\bQ_{\bX} \\
	&= (\bA-\bP)\bX(\bQ_{\bX}^{\top}|\bS|\bQ_{\bX})^{-1}\indefI + \bR\bW_{\star}^{\top}\bQ_{\bX}.
\end{align*}
Both $\hat{\bX} = \hat{\bU}|\hat{\bS}|^{1/2}$ and $\bQ_{\bX}^{\top}|\bS|\bQ_{\bX} = \bX^{\top}\bX$, yielding the crucial equivalence
\begin{equation}
	\label{eq:GRDPG_ASE_main_eq}
	\hat{\bX}\bW_{\star}^{\top}\bQ_{\bX} - \bX
	= (\bA-\bP)\bX(\bX^{\top}\bX)^{-1}\indefI + \bR\bW_{\star}^{\top}\bQ_{\bX}.
\end{equation}
Theorem~\ref{thrm:GRDPG_ASE_ttinf} holds by observing that
\begin{align*}
	\|\bR\bW_{\star}^{\top}\bQ_{\bX}\|_{\ttinf}
	&= \probO\left(\tfrac{d^{1/2}(\log n)^{2c}}{n^{1/2}(n\rho_{n})^{1/2}}\right)
\end{align*}
and
\begin{align*}
	\|(\bA-\bP)\bX(\bQ_{\bX}^{\top}|\bS|\bQ_{\bX})^{-1}\indefI\|_{\ttinf}
	&= \|(\bA-\bP)\bU|\bS|^{-1/2}\indefI\bQ_{\bX}\|_{\ttinf}
	= \probO\left(\tfrac{d^{1/2}(\log n)^{c}}{n^{1/2}}\right),
\end{align*}
where Lemma~\ref{lem:indefiniteSpectralBoundGRDPG} was implicitly invoked.

For the purpose of establishing Theorem~\ref{thrm:GRDPG_ASE_CLT}, the $i$-th row of Eq.~\ref{eq:GRDPG_ASE_main_eq}, when scaled by $n^{1/2}$, can be written as
\begin{align*}
	n^{1/2}(\bQ_{\bX}^{\top}\bW_{\star}\hat{X}_{i} - X_{i})
	&= n^{1/2}\indefI(\bX^{\top}\bX)^{-1}((\bA-\bP)\bX)_{i} + n^{1/2}\bQ_{\bX}^{\top}\bW_{\star}R_{i},
\end{align*}
where the vector $n^{1/2}\indefI(\bX^{\top}\bX)^{-1}((\bA-\bP)\bX)_{i}$ can be expanded as
\begin{equation*}
	\indefI(n^{-1} \bX^{\top}\bX)^{-1}\left[n^{-1/2}\sum_{j}(\bA_{ij}-\bP_{ij})X_{j} \right] = \indefI(n^{-1} \rho_n^{-1} \bX^{\top}\bX)^{-1}\left[(n \rho_n) ^{-1/2}\sum_{j}(\bA_{ij}-\bP_{ij})\xi_{j}
	\right]
\end{equation*}
by recalling that $X_i = \rho_n^{1/2} \xi_i$. The law of large numbers and the continuous mapping theorem together yield $(n^{-1} \rho_n^{-1} \bX^{\top}\bX)^{-1} \rightarrow \bE(\xi \xi)^{-1} \equiv \bDelta^{-1}$ almost surely.
In addition, the classical multivariate central limit theorem gives the (conditional) convergence in distribution
\begin{equation}
	\Bigl((n\rho_{n})^{-1/2}\sum_{j}(\bA_{ij}-\bP_{ij})\xi_{j} \Bigl \vert \xi_i = x_i \Bigr) \overset{}{\rightarrow} \mathcal{N}_{d}(\bZero, \bGamma_{\rho_{n}}(x_{i})),
\end{equation}
with explicit covariance matrix given by $\bGamma_{\rho_{n}}(x_{i}) = \Ex\left\{(x_{i}^{\top}\indefI \xi)(1-\rho_{n}x_{i}^{\top}\indefI \xi) \xi \xi^{\top}\right\}$. In addition, by combining Lemma~\ref{lem:indefiniteSpectralBoundGRDPG} with our earlier analysis, it follows that the (transformed) residual matrix satisfies
\begin{equation*}
	\|n^{1/2}\bQ_{\bX}^{\top}\bW_{\star}R_{i}\|_{\ttinf}
	\le n^{1/2}\|\bQ_{\bX}\|\|\bW_{\star}\|\|\bR\|_{\ttinf}
	= \probO\left(\tfrac{d^{1/2}(\log n)^{2c}}{(n\rho_{n})^{1/2}}\right)
	\overset{\textnormal{p}}{\rightarrow} 0.
\end{equation*}
The above observations together with an application of Slutsky's theorem yield
\begin{equation}
	\mathbb{P}\left\{n^{1/2}(\bQ_{n}\hat{X}_{i} - X_{i}) \le z \,\, \mid X_{i} = \rho_n^{1/2} x\right\}
	\rightarrow \Phi(z,\bSigma_{\rho_{n}}(x))
\end{equation}
for $\bQ_{n} := \bQ_{\bX_{n}}^{\top}\bW_{\star,n}$ and $\bSigma_{\rho_{n}}(x) = \indefI\bDelta^{-1}\bGamma_{\rho_{n}}(x)\bDelta^{-1}\indefI$. Application of the Cram\'er-Wold device yields Theorem~\ref{thrm:GRDPG_ASE_CLT}, concluding the proof.

\end{document}